\documentclass{article}

\usepackage[ruled,vlined,linesnumbered]{algorithm2e}

\usepackage[margin=1in]{geometry}

\usepackage{wrapfig}
\usepackage{lipsum}
\usepackage{hhline}
\usepackage[utf8]{inputenc} 
\usepackage[T1]{fontenc}    
\usepackage{hyperref}       
\usepackage{url}            
\usepackage{booktabs}       
\usepackage{amsfonts}       
\usepackage{nicefrac}       
\usepackage{microtype}      
\usepackage{xcolor}         
\usepackage{titletoc}
\usepackage{cite}
\usepackage[font=small]{caption}

\usepackage[table]{xcolor}
\newcommand{\colorcolsA}[1]{\cellcolor[rgb]{0.85,0.9,0.9}#1}
\usepackage{multirow}

\SetKwInput{KwInput}{Input}
\SetKwInput{KwOutput}{Output}
\SetKwFor{GlobalFor}{each global round}{do}{}
\SetKwFor{LocalFor}{ each local iteration}{do}{}
\SetKwFor{ClientFor}{ each client}{in parallel do}{}

\title{Routing Without Training: Controllable-Ratio LLM Offloading via Reliability Gating}

\author{%
\begin{tabular}{c}
Evan Chen$^{1}$, Shiqiang Wang$^{2}$, Kevin S. Chan$^{3}$, Su Wang$^{4}$, Christopher G. Brinton$^{1}$\\[0.5em]
$^{1}$Purdue University\\
$^{2}$University of Exeter\\
$^{3}$Army Research Laboratory\\
$^{4}$Princeton University
\end{tabular}%
}

\usepackage{colortbl}
\usepackage{microtype}
\usepackage{graphicx}
\usepackage{subfigure}

\usepackage{booktabs} 
\usepackage{xcolor}
\usepackage{multirow}
\usepackage{makecell}

\usepackage{hyperref}
\usepackage{enumitem} 
\usepackage{subcaption} 

\usepackage{amsmath}
\usepackage{amssymb}
\usepackage{mathtools}
\usepackage{amsthm}
\usepackage[capitalize,noabbrev]{cleveref}

\theoremstyle{plain}
\newtheorem{theorem}{Theorem}[section]

\newtheorem{lemma}[theorem]{Lemma}

\theoremstyle{definition}

\usepackage{pifont}

\allowdisplaybreaks

\definecolor{lightred}{rgb}{1, 0.7, 0.4}
\definecolor{lightblue}{rgb}{0.7,0.7,1}

\definecolor{lightgreen}{rgb}{0.7, 1, 0.7}

\usepackage[textsize=tiny]{todonotes}

\begin{document}

\maketitle

\begin{abstract}
Local-cloud collaboration is a practical way to deploy large language models under resource constraints, but existing methods often rely on trained routers or collaboration-aware finetuning that tie routing behavior to a particular operating regime. In this work, we show that such training may be unnecessary: the local model's own inference-time agreement across sampled responses already provides a strong signal for deciding when to trust local execution vs when to offload to a stronger cloud model. We propose CARGO, a training-free routing framework that estimates this agreement through prompt-varied sampling, applies Bayesian early stopping for sample-efficient uncertainty control, and supports arbitrary target collaboration ratios through lightweight deployment-time calibration. Across diverse reasoning and question-answering tasks, multiple local LLM families and scales, and both pretrained and finetuned local models, CARGO consistently outperforms other training-free baselines and in several settings surpasses supervised learned routers. These results suggest that effective and adaptable local-cloud collaboration can emerge directly from the local model's intrinsic response behavior, without requiring an additional trained router.
\end{abstract}

\section{Introduction}
\label{sec:intro}

As large language model (LLM) services are increasingly deployed on edge devices and other resource-constrained platforms, practical systems must balance the efficiency of local inference with the capability of stronger cloud models. In practical deployments, relying exclusively on cloud inference is undesirable because it introduces communication overhead, higher latency, recurring monetary cost, and potential privacy or connectivity concerns. These issues are especially pronounced in settings such as mobile devices, on-premise assistants, and edge applications, where local responsiveness and reduced dependence on remote infrastructure are often essential~\cite{zhang2024llm,zhu2024llava,zhang2024tinyllama,liu2024mobilellm,xu2024device}. 
Local-cloud collaboration is a natural alternative that preserves the quality benefits of large remote models while reducing the cost of always invoking the cloud. For example, a lightweight local LLM can handle many routine user queries, while more difficult demands are selectively offloaded to a stronger cloud model~\cite{xu2024edgellm}. 
As such, the central question for effective local-cloud collaboration is: when should the system trust the local model, and when should it invoke cloud assistance?
Existing approaches answer this question through explicit routers or collaboration-aware training~\cite{fangbridging,ding2024hybrid}, but typically optimize collaborative behavior around a particular operating regime, creating a mismatch with real deployments where budget, latency tolerance, backend load, and service-level objectives are time-varying.

This limitation suggests that offloading should not be determined solely by a task-specific routing policy learned for one fixed regime. Instead, effective local-cloud collaboration requires a routing principle that remains valid as deployment requirements and target collaboration ratios change. In particular, the system should trust local inference when the local model can answer reliably and invoke cloud assistance otherwise, while still allowing the overall degree of collaboration to be adjusted as needed. 
Therefore, we ground local-cloud routing in an inference-time estimate of the local model's own reliability. This estimate reflects query-level capability and can be calibrated at deployment time to meet different target collaboration ratios, such as offloading $10\%$, $30\%$, or $50\%$ of queries. The question is whether this reliability-driven routing can be achieved without training a separate router at all. These observations motivate the following research questions.

\textbf{How can a system tell, at inference time, whether local models can independently and reliably answer a query?}
A central challenge is that the routing decision must be made without access to ground-truth correctness, using only the local model's own behavior at inference time. Moreover, the desired signal should reflect the model's intrinsic problem understanding rather than rely on task-specific supervision, retraining, or a separately learned router. The difficulty is therefore to identify an intrinsic and broadly applicable reliability signal that remains meaningful across tasks, model families, and deployment settings.

\textbf{How can such a reliability signal be used to make routing both efficient and adaptable at deployment time?}
Reliability estimation itself consumes computation: collecting more evidence can improve confidence in the estimate, at the cost of higher latency and local inference overhead. At the same time, practical systems require routing behavior that can adapt to changing collaboration budgets, rather than remaining tied to a single fixed operating point. The difficulty is therefore to use reliability information in a sample-efficient manner while still enabling flexible deployment-time control over when cloud assistance is invoked.

Our key insight is that effective local-cloud routing may not require training a separate router at all. By properly leveraging the local model's own response behavior, one can derive a training-free routing principle that remains effective across diverse tasks and local LLM backbones while still supporting flexible deployment-time collaboration. This suggests that the local model's intrinsic response agreement is itself a strong foundation for local-cloud collaboration, and in some cases may even provide a stronger routing signal than alternatives that are explicitly trained for specific task properties/distributions.

\textbf{Our Contributions.}
Building on this insight, we propose Collaboration-Adaptive Routing via Agreement-Guided Offloading (CARGO), a training-free local-cloud collaboration framework for controllable LLM offloading. CARGO turns agreement among prompt-varied local responses into a calibrated routing policy. It (i) estimates local reliability from response stability, (ii) reduces unnecessary sampling through Bayesian early stopping, and (iii) adjusts routing behavior through a lightweight deployment-time calibration step. Since CARGO operates only through prompts and generated outputs, it applies when local fine-tuning or post-training is unavailable and can even be added as an inference-time wrapper to existing LLM or agent-based applications. This design makes agreement-guided routing practical for resource-constrained settings, where latency, compute, and cloud-usage budgets may vary across devices, tasks, and deployment conditions. Our main contributions are as follows:

\begin{itemize}[leftmargin=*, itemsep=0pt, topsep=1pt]
    \item We identify prompt-varied response agreement as a transferable inference-time reliability signal for local-cloud routing, showing that it yields greater correctness and consistency than than self-reported confidence or Chain-of-Thought (CoT) steps (Sec.~\ref{sec:motivation}).

    \item We introduce a practical routing procedure that combines Bayesian early stopping with lightweight calibration, reducing the number of local generations needed for reliability estimation while supporting user-specified collaboration ratios without retraining (Sec.~\ref{sec:method}).

    \item We demonstrate that CARGO achieves strong accuracy under constrained cloud usage across diverse tasks and local LLM backbones. It consistently improves over training-free baselines and, in several settings, outperforms supervised learned routers while remaining adaptable at deployment time (Sec.~\ref{sec:exp}).
\end{itemize}

\textbf{Related Works.}
Existing work on local-cloud collaboration for practical, resource-constrained LLM deployments has pursued two main directions. The first uses explicit routing mechanisms, such as external classifiers, preference-trained routers, or confidence- and cost-aware query selection policies, to decide when inputs should be offloaded~\cite{ding2024hybrid,ong2024routellm,chen2023frugalgpt,oh2025repic}. The second integrates routing behavior into local-model training itself, training the local model to jointly improve task performance and offloading decisions during post-training~\cite{chen2026joint,fangbridging}.  
While effective, these approaches still rely on learned routing behavior, either through separately trained routing modules or through collaboration-aware post-training. Such learned routing can be costly to obtain and rigid at deployment time, especially when budgets, latency constraints, or target collaboration ratios change. 
In contrast, our focus is on whether routing can instead be grounded directly in an intrinsic inference-time reliability signal extracted from the local model's own response behavior, without training a router or fine-tuning the local model for collaboration.

Recent work on self-consistency shows that agreement across multiple sampled responses is a strong inference-time signal that can be extracted directly from a model's own generation behavior, without requiring additional supervision or fine-tuning~\cite{chen2023universal,wang2022self,taubenfeld2025confidence}. Beyond the original majority-vote formulation, subsequent studies show that this signal remains useful across more general generation settings and can be strengthened through improved aggregation or confidence-aware selection~\cite{toh2024not}. However, these works primarily study a local-only inference setting, where cross-sample agreement is used to select or refine the same model's final answer. Local-cloud collaboration introduces a new decision dimension: the system need not rely on the best local aggregate, but can instead route uncertain cases to a stronger cloud model. This shifts the role of agreement from answer aggregation to reliability-aware routing, where high agreement supports local execution and low agreement motivates cloud assistance. Since this decision requires repeated sampling on a resource-limited local model, sample-efficient agreement estimation becomes essential.

\begin{figure}
    \centering
    \includegraphics[width=0.98\linewidth]{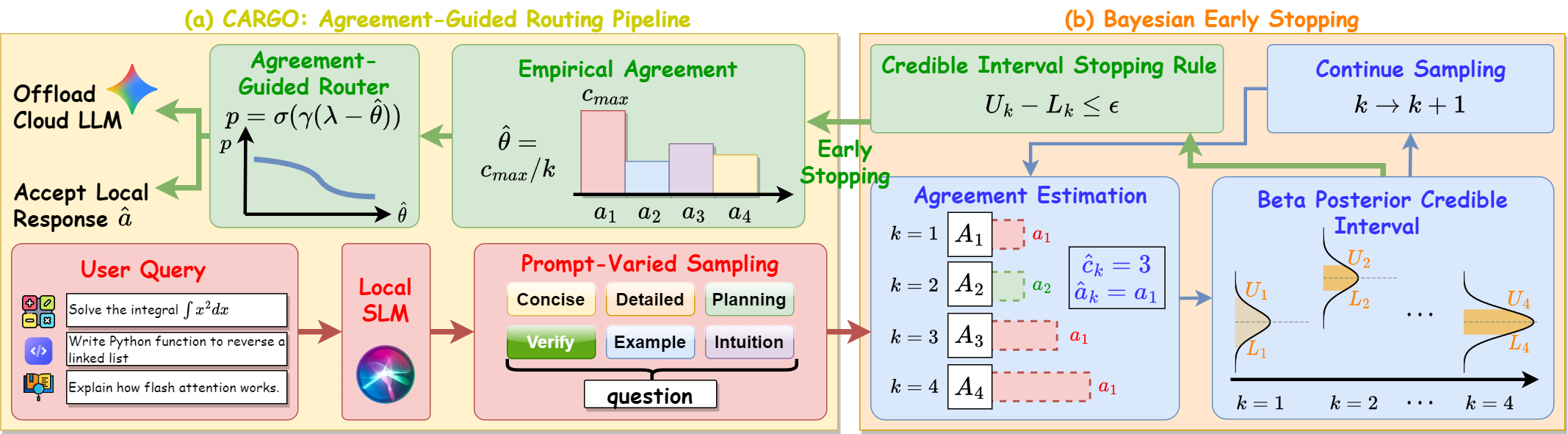}
    \caption{Overview of CARGO. The local model estimates agreement under prompt-varied sampling, then applies Bayesian early stopping to decide whether to accept the local response or offload to the cloud.}
    \label{fig:framework}
\end{figure}
\vspace{-0.1in}
\section{Motivation}
\vspace{-0.1in}

\label{sec:motivation}

\textbf{Training-Free Reliability Signals for Local-cloud Collaboration.}
Local-cloud collaboration has emerged as a practical paradigm for deploying LLMs under resource constraints. In such systems, a lightweight local model handles most queries while more difficult cases are routed to a stronger cloud model. A central challenge is therefore to design a routing mechanism that decides, for each query, whether the local response can be trusted.

\begin{wrapfigure}{r}{0.48\textwidth}
\vspace{-0.2in}
    \begin{center}
    \includegraphics[width=0.99\linewidth]{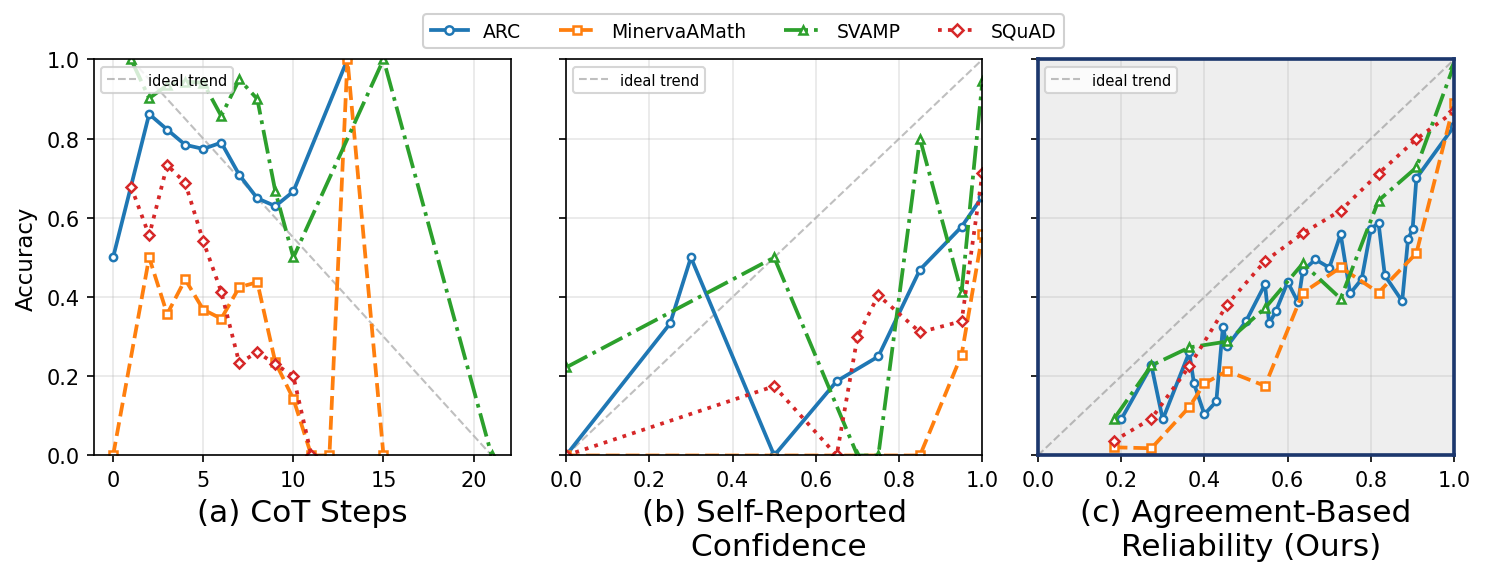}
    \end{center}
    \caption{Comparison of inference-time routing signals using Qwen2.5-7B-Instruct across multiple tasks. 
    Compared with CoT steps and self-reported confidence, response agreement exhibits a clearer monotonic relationship with accuracy, making it a stronger basis for reliability-guided routing.}
    \label{fig:motivation}
    \vspace{-0.2in}
\end{wrapfigure}

Many existing approaches rely on training an additional routing model or fine-tuning the local model to produce calibrated confidence scores. However, these approaches require task-specific supervision and retraining, limiting their applicability across domains and deployment conditions. We therefore ask whether the local model's own inference-time behavior already contains signals that can distinguish reliable from unreliable local predictions. To this end, we compare several candidate signals and evaluate how consistently they align with correctness across tasks.

\textbf{Existing Self-Reflection Signals.}
A natural hypothesis is that language models exhibit implicit indicators of task difficulty during generation. Two commonly used signals are Chain-of-Thought (CoT) Steps and Self-Reported Confidence. The first uses the number of intermediate reasoning steps under CoT prompting as a proxy for uncertainty, based on the heuristic that harder problems require longer reasoning~\cite{fangbridging}. The second explicitly asks the model to assess its own answer by outputting a confidence score~\cite{xiong2023can,zhao2024fact,tao2024trust}. However, as shown in Figure~\ref{fig:motivation}, both signals exhibit weak and unstable correlation with correctness in our setting, limiting their usefulness as training-free routing signals without additional task-specific adaptation or fine-tuning.

\textbf{Response Agreement for Reliability-Guided Routing.}
We leverage self-consistency, in which models encode their own reliability through response agreement across samples, to develop a principled methodology for local-cloud routing. Given a query $x$, a LLM generates an autoregressive answer by repeatedly selecting the next token from a probability distribution over the vocabulary; we refer to this generation procedure as decoding. Let $A \sim P(\cdot \mid x)$ denote the induced distribution over final answers. After sampling $k$ responses, we obtain counts $\{c_k^a\}_a$ over unique answers and define the empirical agreement level
\begin{equation}
\hat{\theta}_k = \max_a \frac{c_k^a}{k}.
\end{equation}
Here, $\hat{\theta}_k$ measures how strongly the sampled responses concentrate on a single answer. Empirically, $\hat{\theta}_k$ correlates much more strongly with correctness than CoT steps or self-reported confidence as shown in Figure~\ref{fig:motivation}, with high agreement typically corresponding to questions the local model can answer reliably and low agreement indicating ambiguity or difficulty.

\textbf{From Fixed Sampling to Adaptive Agreement Estimation.}
The usefulness of agreement as a reliability signal depends not only on how agreement is measured, but also on how efficiently it is estimated. Standard self-consistency typically draws a fixed number of samples $K$ for every query. However, queries differ in how quickly their agreement behavior becomes clear. For easy queries, sampled responses concentrate on the same answer after only a few generations, so additional samples provide little new routing information. For difficult or ambiguous queries, more evidence may be needed before the system can confidently decide whether the local model should be trusted.

This observation suggests that agreement-based routing should allocate sampling effort adaptively rather than uniformly. Such adaptivity is especially important in local-cloud collaboration, where repeated local generations introduce latency and compute overhead, while the system must still respect deployment-level constraints such as a target collaboration ratio. Instead of treating $K$ as a fixed budget for every input, CARGO sequentially samples only until the agreement estimate is sufficiently reliable, and then decides whether to accept the local prediction or invoke cloud assistance.

In the next section, we formalize these ideas by interpreting multi-sample agreement as an estimator of the mode (frequency) mass and developing a Bayesian framework to quantify its uncertainty. This leads to a Bayesian early-stopping procedure that terminates once the agreement estimate is sufficiently precise, after which routing decisions can be calibrated to a desired target collaboration ratio.

\begin{algorithm}[tb]
\caption{{\tt CARGO}: Collaboration-Adaptive Routing via Agreement-Guided Offloading}
\label{alg:cargo_alg}
\small
\DontPrintSemicolon
\SetKw{Break}{break}
\KwIn{
local model $M_L$; cloud model $M_C$; prompt set $\mathcal{S}=\{s_i\}_{i=1}^M$; temperature $T$; credible level $1-\delta$;
max samples $K_{\max}$; warmup iterations $T_0$; warmup batch size $B$; target offload ratio $\rho$; step size $\eta_\lambda$;
Beta prior $(\alpha_0,\beta_0)$; width threshold $\varepsilon$; logistic slope $\gamma$
}
\KwOut{for each query $x$: final answer $y(x)$ and routing indicator $r(x)\in\{0,1\}$ (1 = offload to cloud)}

\BlankLine
\textbf{Warmup (ratio control):} initialize intercept $\lambda\in\mathbb{R}$\;
sample a fixed warmup batch $\mathcal{B}=\{x^{(i)}\}_{i=1}^{B}$\;
\For{$t \leftarrow 1$ \KwTo $T_0$}{
    \ForEach{$x \in \mathcal{B}$}{
        $(y(x), r(x)) \leftarrow \textsc{RouteQuery}(x,\lambda)$\;
    }
    compute empirical $\bar r_t \leftarrow \frac{1}{B}\sum_{x\in\mathcal{B}} r(x)$, update threshold  $\lambda \leftarrow \lambda - \eta_\lambda(\bar r_t-\rho)$
}
\textbf{Deploy:} fix $\lambda$ for subsequent queries.\;

\BlankLine
\SetKwFunction{FRoute}{\textsc{RouteQuery}}
\SetKwProg{Fn}{Function}{:}{}
\Fn{\FRoute{$x,\lambda$}}{
    initialize multiset $\mathcal{A}\leftarrow \emptyset$ and counts $c(\cdot)\leftarrow 0$\;
    \For{$k \leftarrow 1$ \KwTo $K_{\max}$}{

        sample prompt template $s_k \sim \mathrm{Unif}(\mathcal{S})$, sample response $A_k \sim M_L(\cdot \mid x,s_k;T)$\;
        update counts: $\mathcal{A}_k\!\leftarrow\!\mathcal{A}_{k-1}\cup\{A_k\}$, $c(A_k)\!\leftarrow\!c(A_k)+1$\;
update counts and empirical agreement $(\hat a_k, \hat c_k) \leftarrow (\arg\max_a c_k^a,\; \max_a c_k^a)$\;
compute posterior $\theta|\mathcal{A}_k\sim\mathrm{Beta}(\alpha_0+\hat c_k,\beta_0+k-\hat c_k)$
and credible interval $[L_k,U_k]$\;

        \If{$U_k-L_k \le \varepsilon$}{
            \Break\tcp*{confidence-based early exit}
        }
    }

compute agreement estimate $\hat\theta \leftarrow \hat c_k/k$ and offload probability $p \leftarrow \sigma\!\big(\gamma(\lambda-\hat\theta)\big)$.

    sample $r \sim \mathrm{Bernoulli}(p)$\;
    \eIf{$r=0$}{
        \Return $(\hat a,0)$ \tcp*{accept local answer}
    }{
        query cloud $y \sim M_C(\cdot \mid x)$\;
        \Return $(y,1)$ \tcp*{offload to cloud}
    }
}
\end{algorithm}

\vspace{-0.1in}
\section{CARGO: Training-Free Agreement-Guided Routing}
\vspace{-0.1in}

\label{sec:method}

We consider a local-cloud inference setting with a lightweight local model $M_L$ and a stronger cloud model $M_C$. Given a query $x$, the system must decide whether to accept the local prediction or offload to the cloud. To this end, CARGO estimates reliability through multi-sample agreement, using (i) prompt variation, (ii) Bayesian mode-mass estimation, and (iii) Bayesian early stopping with calibrated probabilistic routing. The complete procedure is summarized in Algorithm~\ref{alg:cargo_alg}.

\textbf{Diverse Sampling via Prompt Variation.}
Sec.~\ref{sec:motivation} shows that agreement across multiple generations provides a useful reliability signal, but estimating this agreement requires informative variation across responses. A standard way to obtain such variation is temperature sampling, where higher temperature increases randomness in token selection. However, this randomness can perturb the model's native generation behavior and reduce the quality of individual responses. At the other extreme, deterministic decoding with a fixed prompt often produces identical outputs, making agreement uninformative. CARGO resolves this tension by combining deterministic decoding with prompt variation: we sample across semantically equivalent system prompts that preserve the task while encouraging different reasoning styles. Thus, we induce diversity across reasoning trajectories and final answers without relying on stochastic sampling noise.

Our approach offers two key advantages. First, deterministic decoding preserves the model's native generation behavior, ensuring that individual response quality is not degraded by temperature-induced perturbations. Second, by decoupling diversity from temperature, agreement estimates better reflect the model's inherent response stability rather than artifacts of sampling randomness.

Concretely, we define a small set of semantically equivalent system prompts $\mathcal{S}=\{s_1,\dots,s_M\}$, each encouraging a different reasoning style while preserving task semantics. For a query $x$, we sample a prompt $s_k \sim \mathrm{Unif}(\mathcal{S})$ and generate the response deterministically as $A_k=M_L(x,s_k;T=0)$. Across prompts and samples, we accumulate responses into a pooled set $\mathcal{A}_k=\{A_1,\dots,A_k\}$, where $k$ denotes the number of responses generated so far.

\textbf{Bayesian Estimation of Mode Mass.}
Given the pooled sample set $\mathcal{A}_k$, we compute counts $\{c_k^a\}_a$ over unique answers. Let $(\hat a_k,\hat c_k) = (\arg\max_a c_k^a,\; \max_a c_k^a)$ denote the empirical majority answer and its count. The empirical agreement estimator is then $\hat{\theta}_k=\hat c_k/k$. We interpret the \emph{mode mass} $\theta = P(A=\hat{a}_k \mid x)$ as the probability that the model outputs its most frequent answer under the same sampling procedure. Conditioned on $\hat{a}_k$, define the indicator $Z_i = \mathbf{1}\{A_i=\hat{a}_k\}$. Under conditional independence of stochastic generations, then $Z_i \sim \mathrm{Bernoulli}(\theta)$. We place a conjugate prior $\theta \sim \mathrm{Beta}(\alpha_0,\beta_0)$, where $\alpha_0$ and $\beta_0$ act as prior pseudo-counts for observing samples that match or do not match the empirical majority answer, respectively. Given $\hat c_k$ successes out of $k$ samples, conjugacy yields the posterior 
\begin{equation}
\textstyle \theta \mid \mathcal{A}_k
\sim
\mathrm{Beta}\!\left(\alpha_0+\hat c_k,\;
\beta_0+k-\hat c_k\right).
\end{equation}
The posterior mean provides a prior-smoothed estimate of the empirical agreement $\hat{\theta}_k$, while the posterior uncertainty decreases as more samples are collected.

\begin{lemma}[Posterior Contraction of Mode Mass]
\label{lem:posterior_contraction}
Assume decoding samples $\{A_i\}_{i=1}^k$ are conditionally independent. Under the Beta prior $\theta \sim \mathrm{Beta}(\alpha_0,\beta_0)$ with fixed hyperparameters, the posterior variance satisfies
\begin{equation}
\textstyle \mathrm{Var}(\theta \mid \mathcal{A}_k)\leq \frac{1}{4(k+\alpha_0+\beta_0+1)}.\notag
\end{equation}
\end{lemma}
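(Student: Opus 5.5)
The plan is to use the closed-form variance of a Beta distribution, applied to the conjugate posterior established just before the statement, and then bound the mean-times-one-minus-mean factor by $1/4$.

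First, I fix the posterior parameters. Write $\alpha_k = \alpha_0+\hat c_k$ and $\beta_k = \beta_0+k-\hat c_k$. By conditional independence, the indicators $Z_i$ are i.i.d.\ Bernoulli$(\theta)$ given $\theta$, so conjugacy gives $\theta\mid\mathcal{A}_k\sim\mathrm{Beta}(\alpha_k,\beta_k)$. Note that $\alpha_k+\beta_k = k+\alpha_0+\beta_0$ does not depend on the data. I will treat the majority answer $\hat a_k$ as fixed when forming the likelihood, exactly as the paper does. I will not re-derive the conditioning subtlety that $\hat a_k$ itself depends on the samples, since the statement is made under that model.

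Second, I recall that for $\mathrm{Beta}(\alpha,\beta)$ the variance is
\begin{equation}
\mathrm{Var}=\frac{\alpha\beta}{(\alpha+\beta)^2(\alpha+\beta+1)}=\frac{\mu(1-\mu)}{\alpha+\beta+1},\qquad \mu=\frac{\alpha}{\alpha+\beta}.\notag
\end{equation}
This follows from $\mathbb{E}[\theta]=\alpha/(\alpha+\beta)$ and $\mathbb{E}[\theta^2]=\alpha(\alpha+1)/((\alpha+\beta)(\alpha+\beta+1))$, which are standard Beta-function identities.

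Third, since $\mu\in(0,1)$, AM--GM gives $\mu(1-\mu)\le 1/4$. Substituting $\alpha_k+\beta_k+1=k+\alpha_0+\beta_0+1$ then yields the bound, uniformly over all data realizations $\hat c_k\in\{0,\dots,k\}$.

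There is no real obstacle here. The only step that needs care is noticing that the denominator $\alpha_k+\beta_k$ is data-independent, which is what makes the bound uniform in the observed counts. A side remark worth adding afterward is that this bound implies the posterior standard deviation is $O(k^{-1/2})$, which justifies that the credible-interval width used for early stopping eventually falls below $\varepsilon$.
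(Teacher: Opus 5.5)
Your proof is correct, and it proves the stated bound where the paper's appendix argument does not. You and the paper both start from the closed-form Beta variance with $a=\alpha_0+\hat c_k$ and $b=\beta_0+k-\hat c_k$. The paper then argues only by orders of magnitude ($ab=O(k^2)$ and $(a+b)^2(a+b+1)=O(k^3)$) to conclude $\mathrm{Var}(\theta\mid\mathcal{A}_k)=O(1/k)$. That never produces the explicit constant $\frac{1}{4(k+\alpha_0+\beta_0+1)}$ claimed in the statement. It also uses an upper $O(k^3)$ bound on the denominator, which is the wrong direction for bounding a quotient; what is needed is the lower bound $(a+b)^2(a+b+1)\ge k^3$, which does hold because $a+b=k+\alpha_0+\beta_0$.

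Your rewriting $\mathrm{Var}=\mu(1-\mu)/(a+b+1)$, together with $\mu(1-\mu)\le 1/4$ and the fact that $a+b$ does not depend on the data, gives exactly the stated inequality, uniformly over $\hat c_k\in\{0,\dots,k\}$. The paper's asymptotic version offers nothing beyond the $O(1/k)$ rate used later in Theorem~\ref{thm:sequential_estimation_consistency}, and your bound delivers that rate as well. Finally, your bound is a deterministic fact about $\mathrm{Beta}(\alpha_0+\hat c_k,\beta_0+k-\hat c_k)$ for any count. So the data-dependence of $\hat a_k$ that you set aside affects only whether this distribution can be read as a genuine posterior, not the inequality itself.
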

The proof is provided in  Appendix~\ref{appen:post_pf}. This contraction result is useful for our routing procedure because CARGO stops sampling based on posterior uncertainty rather than a fixed sample count. Lemma~\ref{lem:posterior_contraction} shows that each additional response increases the effective posterior sample size, causing uncertainty around the agreement estimate to shrink at rate $O(1/k)$. The bound depends on the total prior concentration $\alpha_0+\beta_0$, which controls uncertainty, while the prior mean $\alpha_0/(\alpha_0+\beta_0)$ can still affect finite-sample estimates and early routing decisions. Thus, the lemma provides a principled basis for the credible-interval stopping rule used by CARGO: sampling continues only until the agreement estimate is sufficiently reliable.

\textbf{Bayesian Early Stopping for Agreement Estimation.}
While agreement provides a useful reliability signal, generating a large number of samples per query can increase latency. To reduce unnecessary sampling, we adopt a Bayesian early-stopping procedure that stops once the uncertainty of the mode-mass estimate becomes sufficiently small.
Given the posterior distribution $\theta \mid \mathcal{A}_k$, we compute a $(1-\delta)$ credible interval $[L_k,U_k]$. The algorithm continues sampling until the posterior uncertainty becomes small:
$U_k - L_k \le \varepsilon$.

The following result shows that this procedure yields a consistent estimate of the mode mass and terminates after a finite number of samples with probability one.

\begin{theorem}[Consistency of Bayesian Early-Stopped Mode-Mass Estimation]
\label{thm:sequential_estimation_consistency}
Let $\theta^\star=\max_a P(a\mid x)$ denote the true mode mass and assume decoding samples $\{A_i\}$ are conditionally independent. Let $[L_k,U_k]$ be the $(1-\delta)$ credible interval of the Beta posterior $\theta\mid\mathcal{A}_k$. Then $\hat{\theta}_k \to \theta^\star$ and $U_k-L_k\to0$ almost surely as $k\to\infty$. Consequently, the stopping rule $U_k-L_k\le\varepsilon$ terminates almost surely and returns a consistent estimate of $\theta^\star$.
\end{theorem}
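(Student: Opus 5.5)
The plan is to prove the two almost-sure limits separately and then read off the stopping claim. Throughout, the answer space for a fixed query $x$ is treated as countable, and the samples $A_1,A_2,\dots$ are i.i.d.\ from $P(\cdot\mid x)$ by the conditional-independence assumption.

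\textbf{Consistency of $\hat\theta_k$.} For each answer $a$, the strong law of large numbers gives $c_k^a/k\to P(a\mid x)$ almost surely. Taking a countable intersection of probability-one events, this holds simultaneously for all $a$. What remains is to upgrade pointwise convergence of the empirical frequencies to convergence of their maximum, since $\hat\theta_k=\max_a c_k^a/k$. This is the main technical step, because with infinitely many answers the maximum need not be continuous under pointwise convergence. I will handle it with a uniform (Glivenko--Cantelli/Scheffé-type) argument.
\begin{itemize}
\item \emph{Lower bound.} Fix a mode $a^\star$ with $P(a^\star\mid x)=\theta^\star$. Then $\hat\theta_k\ge c_k^{a^\star}/k\to\theta^\star$, so $\liminf_k\hat\theta_k\ge\theta^\star$.
\item \emph{Upper bound.} Given $\eta>0$, choose a finite set $F$ of answers with $P(F^c\mid x)<\eta$. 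On $F$ the maximum of finitely many convergent sequences converges to $\max_{a\in F}P(a\mid x)\le\theta^\star$. Every $a\notin F$ satisfies $c_k^a/k\le c_k^{F^c}/k$, and $c_k^{F^c}/k\to P(F^c\mid x)<\eta$ almost surely by the strong law applied to the event $F^c$. Hence $\limsup_k\hat\theta_k\le\max(\theta^\star,\eta)$. Letting $\eta\downarrow0$ along a countable sequence gives $\limsup_k\hat\theta_k\le\theta^\star$.
\end{itemize}
Together these give $\hat\theta_k\to\theta^\star$ almost surely.

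\textbf{Interval width.} I will not bound the width by the posterior variance, since Lemma~\ref{lem:posterior_contraction} gives $\mathrm{Var}(\theta\mid\mathcal{A}_k)\le 1/(4(k+\alpha_0+\beta_0+1))\to0$ deterministically, for every sample path. Instead I use Chebyshev's inequality applied to the posterior itself. With $m_k$ the posterior mean,
\[
\Pr\bigl(|\theta-m_k|>t\mid\mathcal{A}_k\bigr)\le \frac{\mathrm{Var}(\theta\mid\mathcal{A}_k)}{t^2}.
\]
Choosing $t_k=\sqrt{\mathrm{Var}(\theta\mid\mathcal{A}_k)/\delta}$ shows that $[m_k-t_k,\,m_k+t_k]$ carries posterior mass at least $1-\delta$. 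For the equal-tailed (or HPD) credible interval, I will argue that $[L_k,U_k]$ has width at most $2t_k$, which is $O(k^{-1/2})\to0$ surely.

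This comparison between the Chebyshev interval and the actual credible interval is the one point needing care. Each tail of the equal-tailed interval has mass $\delta/2$, so the $\delta/2$-quantile lies within $\sqrt{2\mathrm{Var}/\delta}$ of $m_k$ by a one-sided Chebyshev (Cantelli) bound, and the same holds for the other quantile. This gives width at most $2\sqrt{2\mathrm{Var}(\theta\mid\mathcal{A}_k)/\delta}$, and the HPD interval is no wider than the equal-tailed one. In either case $U_k-L_k\to0$, which is stronger than the almost-sure statement claimed.

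\textbf{Termination.} Since the width bound is deterministic, there is a finite $k^\star(\varepsilon,\delta,\alpha_0,\beta_0)$ with $U_k-L_k\le\varepsilon$ for all $k\ge k^\star$. The stopping time $\tau_\varepsilon=\inf\{k:U_k-L_k\le\varepsilon\}$ is therefore finite almost surely, indeed bounded. By the first part, the returned estimate $\hat\theta_{\tau_\varepsilon}$ is consistent: as $\varepsilon\to0$ we have $\tau_\varepsilon\to\infty$, because the width is bounded below by a positive quantity for each fixed $k$ (the posterior is a nondegenerate Beta). Hence $\hat\theta_{\tau_\varepsilon}\to\theta^\star$ almost surely along this subsequence of the convergent sequence $\hat\theta_k$.
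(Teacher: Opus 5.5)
Your proof is correct, and it takes a somewhat different route from the paper's that is also more rigorous.

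\textbf{Consistency of $\hat\theta_k$.} The paper adds an assumption that is not in the theorem: the mode $a^\star$ is unique. It uses the resulting gap $\Delta>0$ to argue that the empirical majority $\hat a_k$ eventually equals $a^\star$, and then reads off $\hat\theta_k=\hat p_k(a^\star)\to\theta^\star$. You instead bound the maximum from both sides: a lower bound from any mode, and an upper bound from a finite truncation $F$ plus the strong law on $F^c$. This needs no uniqueness and covers a countably infinite answer space. In that setting the paper's step ``eventually $\hat p_k(a^\star)>\hat p_k(a)$ for all $a\neq a^\star$'' would need uniform control that the paper does not provide. In exchange, the paper's route yields $\hat a_k=a^\star$ eventually. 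That matters for CARGO, since $\hat a$ is the answer actually returned, but the stated claim does not require it.

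\textbf{Interval width.} The paper only asserts that $O(1/k)$ posterior variance makes any fixed-level credible interval shrink. Your Cantelli argument, giving $U_k-L_k\le 2\sqrt{2\,\mathrm{Var}(\theta\mid\mathcal{A}_k)/\delta}$ for the equal-tailed interval (with HPD no wider), supplies the missing justification. Your remark that the variance bound holds deterministically strengthens almost-sure termination to a bounded stopping time.

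\textbf{Returned estimate.} You read ``returns a consistent estimate'' as $\hat\theta_{\tau_\varepsilon}\to\theta^\star$ because $\tau_\varepsilon\to\infty$ as $\varepsilon\to 0$. This is a precise version of what the paper leaves informal.

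\textbf{Minor point.} You first announce width at most $2t_k$ with $t_k=\sqrt{\mathrm{Var}(\theta\mid\mathcal{A}_k)/\delta}$. That holds for the shortest (HPD) interval, but for the equal-tailed interval you only prove a bound larger by a factor of $\sqrt{2}$. This is harmless, since any $O(k^{-1/2})$ bound suffices.
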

The proof is provided in Appendix~\ref{appen:thm_pf}. Intuitively, this condition ensures that the agreement level has been estimated with sufficient confidence. As new samples are collected, the posterior uncertainty around the mode mass decreases, which narrows the credible interval and guarantees that the CARGO sampling process eventually terminates.
This stopping rule decouples sampling cost from the routing decision: sampling stops once agreement has been estimated reliably, regardless of whether the query is ultimately processed locally or offloaded to the cloud. After termination at sample $k$, the agreement estimator $\hat{\theta}_k$ provides a compact summary of the local model's response stability.

\textbf{Probabilistic Routing Policy.}
Instead of using a deterministic threshold on the agreement estimate, we employ a probabilistic routing policy that maps agreement to an offloading probability. Specifically, given agreement estimate $\hat{\theta}$, we define the offloading probability
$p = \sigma\!\big(\gamma(\lambda-\hat{\theta})\big),$
with sigmoid function $\sigma(z)=1/(1+e^{-z})$. The routing decision is then sampled as $r \sim \mathrm{Bernoulli}(p)$, where $r=1$ denotes cloud offloading and $r=0$ denotes accepting the local answer.
This formulation has several desirable properties. First, the agreement estimate directly controls routing behavior: high agreement implies $\hat{\theta}>\lambda$, resulting in a small offloading probability, while low agreement increases the likelihood of cloud routing. Second, the parameter $\lambda$ acts as an interpretable \emph{agreement pivot}: when $\hat{\theta}=\lambda$, the router is indifferent between local and cloud execution, since $p=0.5$. Third, compared with hard thresholding, this probabilistic formulation avoids instability when many queries have similar agreement levels. It also enables control of the global routing rate, which enables CARGO to meet varying system-level resource constraints.

\textbf{Warmup Calibration of the Routing Parameter.}
The intercept parameter $\lambda$ determines the overall routing behavior of the system. Increasing $\lambda$ raises the offloading probability for all queries, while decreasing $\lambda$ favors local execution. To achieve a target collaboration ratio $\rho$, we calibrate $\lambda$ using a short warmup phase. A fixed batch of $B$ warmup queries $\mathcal{B}$ is sampled, and the router is repeatedly applied to this batch while updating $\lambda$.
Let $r(x;\lambda)\in\{0,1\}$ denote the routing decision for query $x$. At iteration $t$, we compute the empirical offloading rate
$
\bar r_t=\frac{1}{B}\sum_{x\in\mathcal{B}} r(x;\lambda_t).
$
The routing parameter is then updated via stochastic approximation:
\begin{equation}
\lambda_{t+1} = \lambda_t - \eta_\lambda(\bar r_t-\rho).
\end{equation}
This update adjusts $\lambda$ in the direction that reduces the gap between the observed offloading rate and the desired target $\rho$. When the system offloads too frequently ($\bar r_t>\rho$), $\lambda$ is reduced, lowering the offloading probability. Conversely, if routing is too conservative, $\lambda$ increases. After $T_0$ warmup iterations, the calibrated $\lambda$ is fixed for deployment queries.
This calibration procedure allows the system to satisfy a user-specified target collaboration ratio without model retraining or task-specific tuning. Because the routing probability is smooth in $\lambda$, the update converges rapidly in practice.

\vspace{-0.1in}
\section{Experiments}
\vspace{-0.1in}
\label{sec:exp}

\begin{table*}[t]
\caption{Accuracy comparison at a fixed collaboration ratio $\rho=0.3$ across local LLM backbones and benchmarks. By routing based on agreement-estimated local reliability, CARGO achieves consistently higher accuracy across model-dataset pairs than the unsupervised baselines and supervised learned router.}
\vspace{-0.05in}

\label{tab:new_collab_results}
\centering
\setlength{\tabcolsep}{4pt}
\resizebox{0.98\textwidth}{!}{
\small
\begin{tabular}{l l 
    >{\centering\arraybackslash}p{5em}
    >{\centering\arraybackslash}p{5em}
    >{\centering\arraybackslash}p{5em}
    >{\centering\arraybackslash}p{5em}
    >{\centering\arraybackslash}p{5em}
    >{\centering\arraybackslash}p{5em}}
\toprule
\textbf{Model} & \textbf{Method}
& \makecell{\textbf{MATH-}\\\textbf{lighteval}}
& \textbf{GSM8K}
& \textbf{SVAMP}
& \makecell{\textbf{Minerva-}\\\textbf{MATH}}
& \textbf{ARC}
& \textbf{SQuAD} \\
\midrule

\multirow{5}{*}{
    \begin{tabular}{l}
        \textbf{Qwen2.5-} \\
        \textbf{3B-Instruct}
    \end{tabular}
}
& Random Offloading 
& 74.10 & 88.72 & 93.60 & 55.66 & 73.29 & 68.74 \\
& Self-Confidence Router~\cite{zhao2024fact} 
& 80.64 & 89.04 & 94.50 & 59.93 & 74.28 & 72.09 \\
& CoT steps Router~\cite{fangbridging}
& 74.03 & 90.17 & 95.20 & 56.62 & 73.49 & 73.40 \\
& Learned Router~\cite{ding2024hybrid}
& 80.78 & 88.88 & 93.10 & 49.63 & 75.10 & 74.10 \\
& \colorcolsA{\textbf{CARGO (Ours)}} 
  & \colorcolsA{87.29} 
  & \colorcolsA{93.89} 
  & \colorcolsA{97.40} 
  & \colorcolsA{65.81} 
  & \colorcolsA{79.38} 
  & \colorcolsA{79.19} \\
\midrule

\multirow{5}{*}{
    \begin{tabular}{l}
        \textbf{Phi-3-mini-} \\
        \textbf{4k-Instruct}
    \end{tabular}
}
& Random Offloading 
& 61.72 & 82.72 & 89.16 & 45.05 & 85.58 & 73.96 \\
& Self-Confidence Router 
& 60.59 & 82.89 & 89.80 & 47.43 & 86.74 & 77.57 \\
& CoT steps Router 
& 66.82 & 83.65 & 92.10 & 46.32 & 86.62 & 76.97 \\
& Learned Router 
& 64.95 & 87.14 & 93.50 & 43.75 & 89.94 & 77.72 \\
& \colorcolsA{\textbf{CARGO (Ours)}} 
  & \colorcolsA{71.75} 
  & \colorcolsA{89.49} 
  & \colorcolsA{93.30} 
  & \colorcolsA{47.43} 
  & \colorcolsA{90.64} 
  & \colorcolsA{85.20} \\
\midrule

\multirow{5}{*}{
    \begin{tabular}{l}
        \textbf{Llama-3.2-} \\
        \textbf{3B-Instruct}
    \end{tabular}
}
& Random Offloading 
& 63.67 & 85.82 & 91.18 & 40.72 & 60.90 & 59.17 \\
& Self-Confidence Router 
& 67.27 & 86.38 & 92.40 & 41.91 & 61.54 & 62.05 \\
& CoT steps Router 
& 71.63 & 88.50 & 93.00 & 43.75 & 61.43 & 63.18 \\
& Learned Router 
& 70.15 & 84.64 & 88.00 & 43.75 & 65.05 & 63.78 \\
& \colorcolsA{\textbf{CARGO (Ours)}} 
  & \colorcolsA{74.43} 
  & \colorcolsA{92.83} 
  & \colorcolsA{96.60} 
  & \colorcolsA{44.49} 
  & \colorcolsA{66.84} 
  & \colorcolsA{68.16} \\

\bottomrule
\end{tabular}
}
\vspace{-0.05in}
\end{table*}

\textbf{Datasets and LLM configurations.}
We evaluate our method on a diverse set of benchmarks spanning mathematical reasoning (MATH-lighteval, GSM8K, SVAMP, MATH-500, AGIEval-Math, MinervaMath), general question answering (ARC, MMLU), and knowledge-intensive multiple-choice reasoning (SQuAD)~\cite{hendrycksmath2021,patel2021nlp,cobbe2021training,lightman2023lets,zhong2024agieval,miao2020diverse,clark2018think,hendrycks2020measuring,lewkowycz2022solving,rajpurkar2016squad}. Among these, the majority of the datasets focus on mathematical reasoning with varying difficulty levels, ranging from grade-school arithmetic and algebraic word problems to more challenging multi-step mathematical problem solving. In addition, ARC and MMLU provide broader evaluation over science reasoning and general multi-domain knowledge, while SQuAD serves as a reading comprehension benchmark.

For router training, the learned router is trained using splits of MATH-lighteval and ARC-Challenge. This design allows the router to learn routing behavior from both mathematical reasoning tasks and non-math multiple-choice reasoning tasks. Evaluation is conducted across the full benchmark suite to assess generalization.
For inference, we consider a range of instruction-tuned local LLMs covering multiple parameter scales and model families, including Llama-3.2-1B-Instruct, Qwen2.5-3B-Instruct, Qwen2.5-7B-Instruct, Phi-3-mini-4k-instruct, and Phi-3-medium-4k-instruct~\cite{touvron2023llama,bai2023qwen,abdin2024phi3}. For cloud inference, we use DeepSeek-R1~\cite{guo2025deepseek} as the global model across all experiments. These models were selected to capture cross-family differences and within-family scaling behavior, enabling us to study how collaboration performance varies across model capacity and backbone. All experiments are conducted under 2 H100 GPUs.

\textbf{Baselines.}
We consider four routing baselines. \emph{Random Offloading} routes inputs to the cloud model uniformly at random under the target collaboration ratio. \emph{Self-Confidence Router} prompts the local LLM to produce a self-evaluated confidence score~\cite{taubenfeld2025confidence}, which is then thresholded for routing. \emph{CoT-steps Router} enforces chain-of-thought reasoning and uses the resulting number of generated reasoning steps, as a routing signal~\cite{chen2026joint, fangbridging}. These three baselines require no additional training and rely purely on inference-time uncertainty proxies. The realized collaboration ratio may deviate slightly from the target due to score ties or discretization. To ensure fair comparison at each target ratio, we randomly flip a small number of decisions when needed so that every method matches the same prescribed collaboration ratio exactly.
We further compare against a supervised \emph{Learned Router}~\cite{ding2024hybrid}. In particular, we use DeBERTa-large~\cite{he2020deberta} as a classifier-based router to predict whether an input should be offloaded to the cloud model. This learned baseline provides a strong reference point against which to evaluate our method. Exact prompt templates for the step-based, confidence-based, and agreement-based evaluation signals are provided in Appendix~\ref{app:prompt_design}.

\vspace{-0.1in}
\subsection{Experimental Results}
\vspace{-0.1in}

\textbf{Baseline Comparison on Fixed Collaboration Ratio.}
Table~\ref{tab:new_collab_results} reports performance at collaboration ratio $\rho=0.3$ across mathematical reasoning, science Q\&A, and reading comprehension benchmarks. Under this setting, CARGO achieves the strongest and most consistent results across datasets and local backbones, even outperforming the supervised learned router while remaining fully training-free. This shows that effective routing need not rely on a separately trained classifier. While self-confidence and CoT steps baselines can perform reasonably well on individual datasets, their behavior is task-dependent and less consistent across benchmark types and model families. By contrast, CARGO's agreement-based reliability signal is more stable, suggesting that the local model's own response agreement provides a transferable indicator of when local inference can be trusted. Additional results in Appendix~\ref{appen:rlhf_tuned} further show that task-aligned RLHF does not replace routing, but instead makes agreement-based routing more effective by aligning inference-time agreement with correctness.

\begin{figure}[t]
    \centering
    \includegraphics[width=0.98\linewidth]{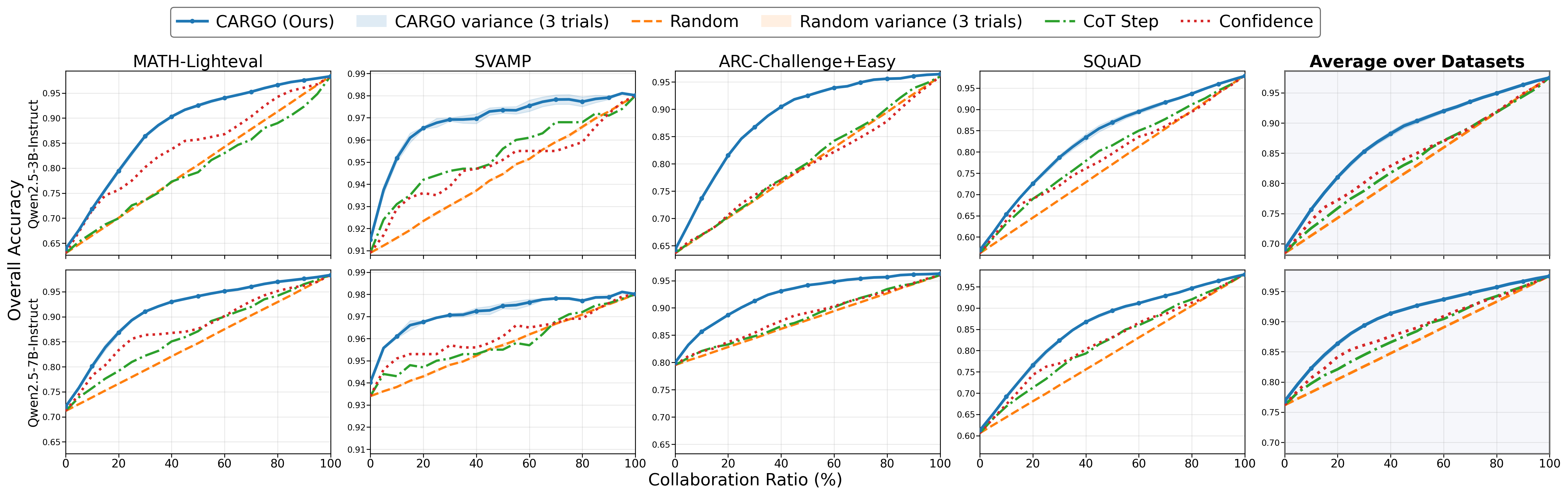}
    \vspace{-0.1in}
    \caption{Evaluation across multiple target collaboration ratios $\rho$ under different local LLM sizes and task groups. CARGO consistently improves the accuracy and cloud-use trade-off over training-free baselines across datasets, with gains remaining stable from low to high collaboration regimes.}
    \vspace{-0.05in}
    
    \label{fig:aoc_of_multitask}
\end{figure}

\begin{figure}[t]
    \centering
    \vspace{-0.1in}
    \includegraphics[width=0.95\linewidth]{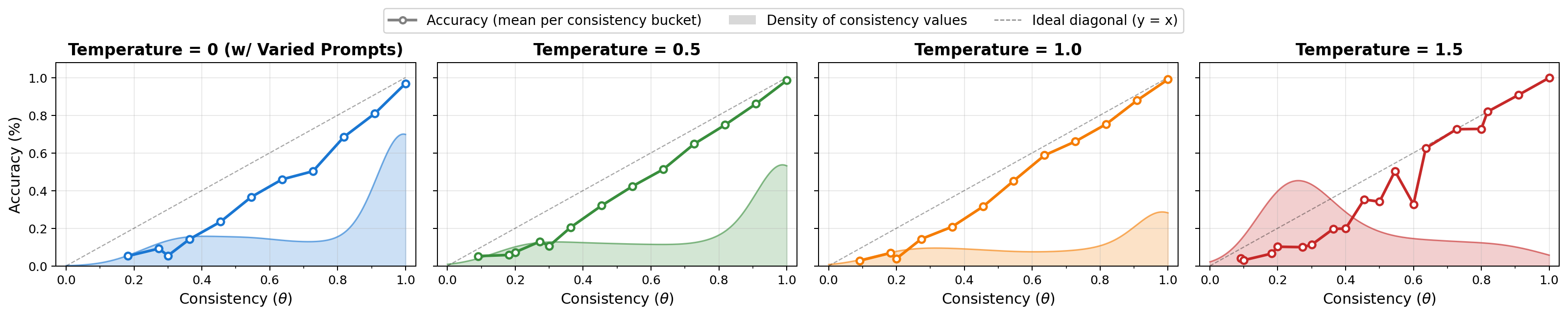}
    \vspace{-0.1in}
    \caption{Prompt-varied sampling on MATH-lighteval. For all nonzero temperatures, sampling is performed with a fixed system prompt; in contrast, temperature zero uses the proposed prompt-varied sampling}
    \vspace{-0.1in}
    \label{fig:density_prompts}
\end{figure}

\begin{wrapfigure}{r}{0.48\textwidth}
\vspace{-0.15in}
    \begin{center}
    \includegraphics[width=0.98\linewidth]{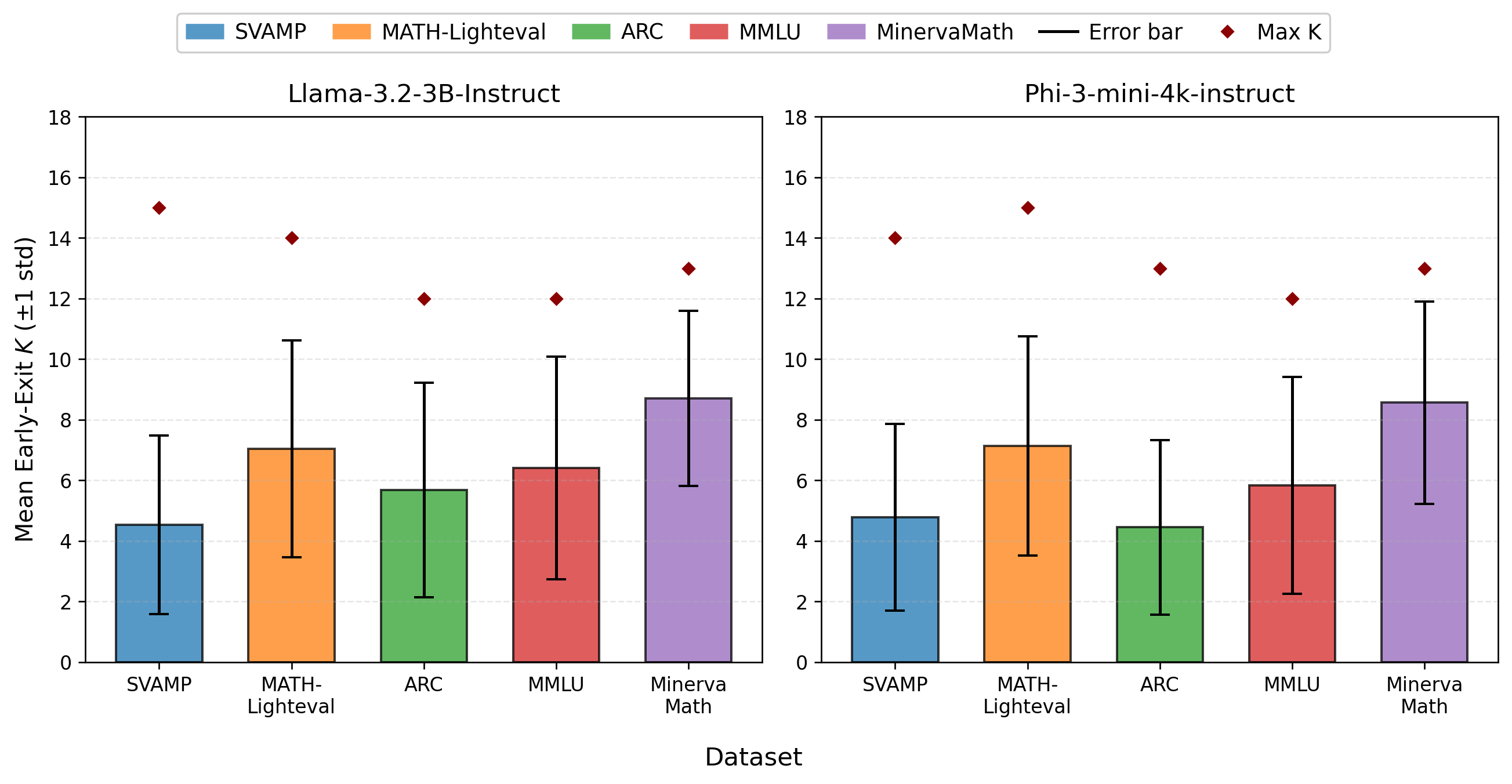}
    \end{center}
    \vspace{-0.1in}
\caption{Early-exit evaluation of CARGO across datasets and local models. The bars report the mean stopping round and red markers indicate the maximum sampled response in the dataset. The results show that the proposed agreement-estimation procedure usually stops far before the sampling budget is exhausted, while adapting its stopping depth to dataset difficulty.}
\label{fig:K_earlyexit}
    \vspace{-0.1in}
\end{wrapfigure}
\textbf{Performance Across Collaboration Ratios.}
Figure~\ref{fig:aoc_of_multitask} evaluates routing performance across a wide range of target collaboration ratios on representative task groups, including two math benchmarks with different difficulty profiles (MATH-lighteval and SVAMP), a broader science and multiple-choice reasoning benchmark (ARC-Challenge+Easy), and a reading comprehension benchmark (SQuAD). Across both local backbones and nearly the full range of collaboration ratios, CARGO consistently achieves the strongest overall performance among the training-free baselines. The advantage is most pronounced in the low-to-mid collaboration regime ($\approx$20–50\%), where cloud assistance is limited and routing quality matters most; correspondingly, CARGO achieves its largest gains in this range across several datasets. By contrast, at the two extremes all methods naturally become more similar: when the collaboration ratio is close to zero, performance is dominated by the local model, while at very high collaboration ratios most queries are offloaded and all methods approach the same cloud usage. All local LLM outputs use deterministic decoding, so CARGO’s variation comes from prompt variation, while \textit{Random} varies from query offloading.

The rightmost average panel further shows that this advantage is consistent overall across tasks. We also observe that the margin on ARC is smaller for all routing strategies, suggesting that the gains from agreement-guided routing depend in part on whether the local model's own response behavior already carries a sufficiently informative reliability signal for the target task. When this intrinsic signal is weaker, the separation between routing strategies naturally becomes smaller. Finally, the three-trial variance band for CARGO is barely visible across panels, indicating that the proposed routing procedure, including Bayesian early stopping, remains highly stable across repeated runs. We additionally evaluate routing under a token-based cloud budget in Appendix~\ref{appen:token}, showing that the same agreement-guided principle remains effective when offloading cost is measured by total cloud token consumption rather than query fraction.

\vspace{-0.1in}
\subsection{Ablation Study}
\vspace{-0.1in}


\textbf{Effectiveness of Prompt-varied Sampling.}
Figure~\ref{fig:density_prompts} compares consistency-quality behavior under the proposed prompt-varied sampling scheme and standard temperature-based sampling. A desirable routing signal should satisfy two properties simultaneously: it should induce enough variation to make agreement informative, while still preserving the underlying reliability of the local model's responses. Pure temperature-based sampling creates a clear tension between these two goals. At lower nonzero temperatures, the sampled outputs remain highly concentrated, which limits the dynamic range of the resulting consistency values and makes reliability estimation less informative. At higher temperatures, the distribution spreads out more, but this comes at the cost of increasingly distorted response behavior and a weaker correspondence between consistency and actual correctness.

By contrast, prompt-varied sampling at temperature zero produces a cleaner consistency signal. Using semantically equivalent prompts introduces structured variation across generations without injecting token-level randomness, so the local model remains closer to its native deterministic behavior while still exposing meaningful differences across queries. As shown in Figure~\ref{fig:density_prompts}, this leads to a sharper monotonic relationship between consistency and accuracy, while yielding a broad and useful spread of consistency values for routing. In particular, many queries concentrate near the two extreme regimes: highly consistent cases that are easy and can be trusted locally, and low-consistency cases that are often difficult and should be routed. This bimodal tendency is beneficial for our Bayesian early-stopping procedure, since decisions become clear quickly in both regimes and fewer samples are needed. Overall, these results show that prompt variation provides a more deployment-friendly source of diversity than temperature-based sampling, improving both the quality of the reliability signal and the sample efficiency of agreement-based routing.

\textbf{Sample Efficiency of Bayesian Early Stopping.}
Figure~\ref{fig:K_earlyexit} shows that the proposed Bayesian early-stopping rule terminates well before the maximum sampling budget $K_{\max}$ across all datasets and both local backbones, confirming that agreement-based routing can be made sample-efficient in practice. The average stopping point also varies across datasets, which indicates that the procedure adapts to task uncertainty rather than using a fixed number of samples for every query. This behavior is enabled by the Beta-posterior credible interval used in our estimator, which supports fast stopping in two decisive regimes. When sampled responses quickly collapse to the same answer, the posterior concentrates near high agreement, providing strong evidence that the local model is reliable and that no offloading is needed. Conversely, when sampled responses remain highly inconsistent, the posterior quickly indicates persistently low agreement, making cloud routing the natural decision without requiring many additional samples. This mechanism is particularly effective because, as also suggested by Figure~\ref{fig:density_prompts}, many benchmarks contain a large fraction of queries that are either relatively easy for the local model or clearly beyond its capability, so the routing decision often becomes clear after only a few samples. Overall, these results show that Bayesian uncertainty quantification not only provides a principled reliability estimate, but also substantially reduces the average sampling cost of agreement-based routing while preserving its practical usefulness.

\begin{wrapfigure}{r}{0.4\textwidth}
\vspace{-0.2in}
    \begin{center}
    \includegraphics[width=0.98\linewidth]{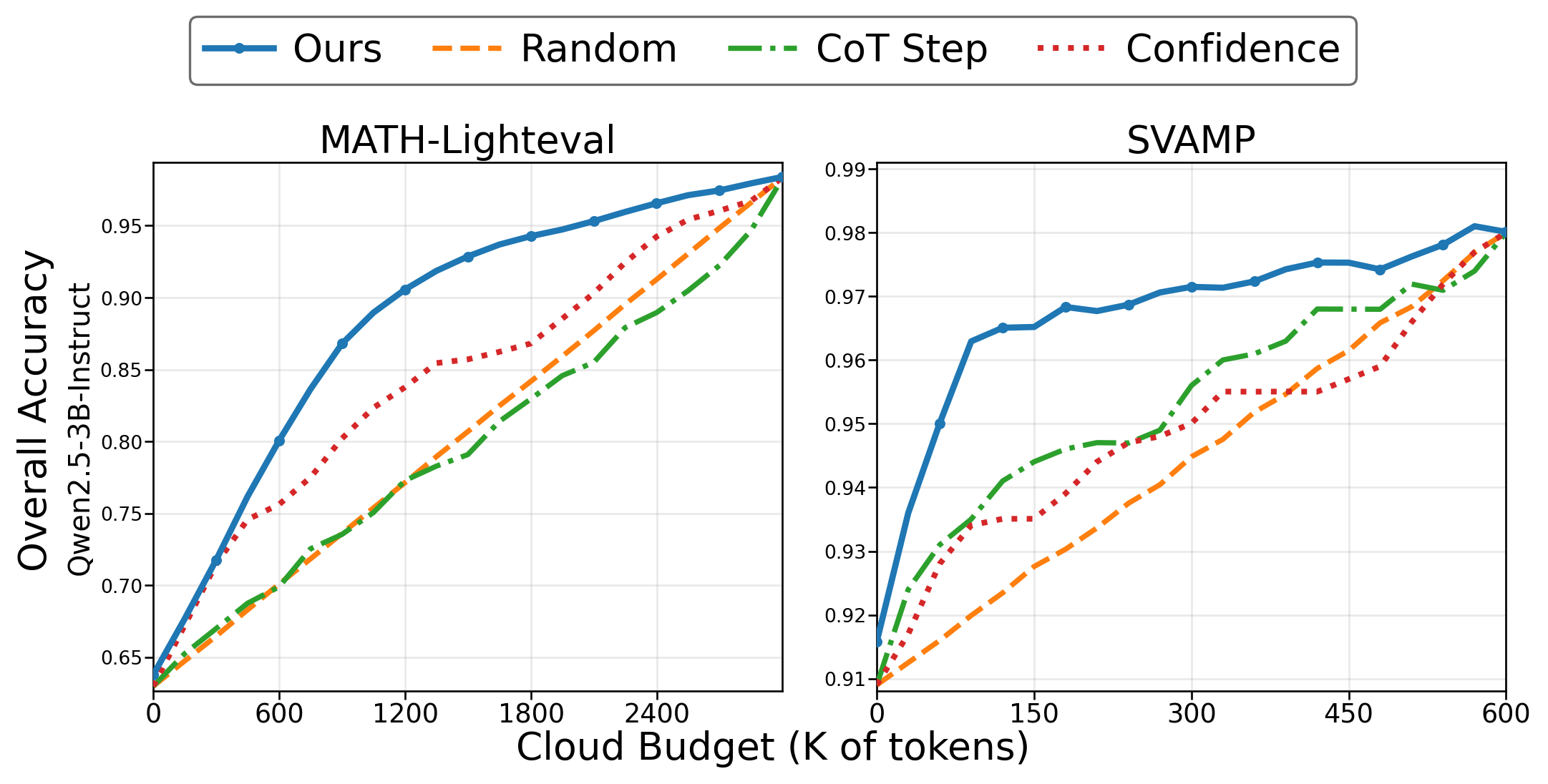}
    \end{center}
    \vspace{-0.1in}
\caption{Token-budgeted offloading results. CARGO can also control cloud-token usage instead of offloading ratio.}
\label{fig:token_budget_short}
\vspace{-0.2in}
\end{wrapfigure}
\textbf{Extending to Token-Budgeted Offloading.}
CARGO can support resource-aware control by regulating cloud-token usage rather than only the offloading ratio. We define
$
c(x)=r(x)\big(\alpha L_{\mathrm{in}}(x)+\beta L_{\mathrm{out}}(x)\big),
$
where $L_{\mathrm{in}}(x)$ and $L_{\mathrm{out}}(x)$ denote input and output token lengths, and calibrate $\lambda$ during warmup to match a target token budget $\tau$. All other components remain unchanged, including agreement estimation, Bayesian early stopping, and stochastic routing; the full procedure is given in Algorithm~\ref{alg:cargo_tok_alg}. Fig.~\ref{fig:token_budget_short} shows that CARGO remains effective under this resource-related control objective, demonstrating that the same agreement-guided principle extends from ratio-based offloading to token-budgeted cloud usage.

\vspace{-0.15in}
\section{Conclusion, Future Work, and Limitations}
\vspace{-0.15in}
\label{sec:conclusion}

We developed CARGO, a training-free routing framework for local-cloud collaboration that uses prompt-varied agreement for reliability estimation, Bayesian early stopping for sample efficiency, and lightweight calibration for controllable collaboration ratios. Across diverse tasks, local LLM families, and pretrained or finetuned local models, CARGO consistently outperformed other training-free baselines and in several settings surpassed supervised learned routers. These results suggest that effective local-cloud collaboration can emerge directly from the local model's own response agreement, without a trained, dedicated router, while further study in Appendix~\ref{appen:small_llms} contextualizes CARGO when local models are weak. 
Future work will extend this principle to richer multimodal and resource-aware collaboration.

\newpage

\bibliography{references}
\bibliographystyle{IEEEtran}

\newpage
\appendix
\onecolumn


\begin{center}
    {\bf\Large Appendix}
\end{center}

\startcontents[sections]
\printcontents[sections]{l}{1}{\setcounter{tocdepth}{3}}

\newpage

\section{Proof for Lemma~\ref{lem:posterior_contraction}}
\label{appen:post_pf}
\begin{proof}
Recall that after $k$ samples the count of the most frequent answer is
$c_{\max}^{(k)}$, and we place a Beta prior
$\theta \sim \mathrm{Beta}(\alpha_0,\beta_0)$ on the reliability
parameter $\theta$.
Under the Bernoulli likelihood induced by the event that a sampled answer
matches the majority answer, the posterior distribution can be viewed as:

\[
\theta \mid \mathcal{A}_k
\sim
\mathrm{Beta}\!\left(
\alpha_0 + c_{\max}^{(k)},\;
\beta_0 + k - c_{\max}^{(k)}
\right).
\]

For a Beta distribution $\mathrm{Beta}(a,b)$, the variance is

\[
\mathrm{Var}(\theta)
=
\frac{ab}{(a+b)^2 (a+b+1)}.
\]

Substituting
\[
a = \alpha_0 + c_{\max}^{(k)}, 
\qquad
b = \beta_0 + k - c_{\max}^{(k)},
\]
we obtain

\[
\mathrm{Var}(\theta \mid \mathcal{A}_k)
=
\frac{(\alpha_0+c_{\max}^{(k)})(\beta_0+k-c_{\max}^{(k)})}
{(k+\alpha_0+\beta_0)^2 (k+\alpha_0+\beta_0+1)}.
\]

Since $0 \le c_{\max}^{(k)} \le k$, both posterior parameters satisfy
\[
a = \alpha_0 + c_{\max}^{(k)} = O(k),
\qquad
b = \beta_0 + k - c_{\max}^{(k)} = O(k).
\]
Therefore the numerator satisfies
\[
ab = O(k^2).
\]

Meanwhile,
\[
a+b = k + \alpha_0 + \beta_0 = O(k),
\]
which implies
\[
(a+b)^2(a+b+1) = O(k^3).
\]

Combining these scalings yields
\[
\mathrm{Var}(\theta \mid \mathcal{A}_k)
=
O\!\left(\frac{k^2}{k^3}\right)
=
O\!\left(\frac{1}{k}\right).
\]
\end{proof}

\section{Proof for Theorem~\ref{thm:sequential_estimation_consistency}}
\label{appen:thm_pf}
\begin{proof}
Let $a^\star = \arg\max_a P(a \mid x)$ denote the true most probable
answer under the sampling distribution, and define the corresponding
probability estimate
\[
\theta^\star = P(A = a^\star \mid x).
\]

For each answer $a$, let
\[
\hat p_k(a) = \frac{1}{k}\sum_{i=1}^k \mathbf{1}\{A_i = a\}
\]
denote the empirical frequency. By the strong law of large numbers,
\[
\hat p_k(a) \rightarrow P(a \mid x)
\quad \text{almost surely for every } a .
\]

Since $a^\star$ is the unique maximizer of $P(a \mid x)$, there exists
$\Delta>0$ such that
\[
P(a^\star \mid x) - P(a \mid x) \ge \Delta
\quad \text{for all } a \neq a^\star .
\]
Consequently, with probability one, there exists a finite $K$ such that
for all $k \ge K$,
\[
\hat p_k(a^\star) > \hat p_k(a)
\quad \text{for all } a \neq a^\star .
\]
Thus the empirical majority answer satisfies $\hat a_k = a^\star$
eventually almost surely, and therefore
\[
\hat{\theta}_k = \frac{c_{\max}^{(k)}}{k}
= \hat p_k(\hat a_k)
\rightarrow P(a^\star \mid x)
= \theta^\star
\quad \text{almost surely}.
\]

Next, by Lemma~\ref{lem:posterior_contraction}, the posterior variance
satisfies
\[
\mathrm{Var}(\theta \mid \mathcal{A}_k) = O(1/k),
\]
implying that posterior uncertainty vanishes as $k\rightarrow\infty$.
Hence any fixed-level credible interval $[L_k,U_k]$ shrinks in width to
zero almost surely:
\[
U_k - L_k \rightarrow 0.
\]

Therefore, for any $\varepsilon>0$, there exists almost surely a finite
$K_\varepsilon$ such that $U_k - L_k \le \varepsilon$ for all
$k \ge K_\varepsilon$. This implies that the stopping rule terminates
after a finite number of samples and the returned estimator converges
to the true probability estimate.
\end{proof}

\section{Additional Experiments}

\subsection{Fixed Collaboration Ratio on more datasets}
Table~\ref{tab:appendix_math500_agieval} reports additional results on MATH-500 and AGI-Eval Math under the same fixed collaboration ratio $\rho=0.3$. These two benchmarks provide a complementary evaluation of mathematical reasoning beyond the main-table results. Across all three local backbones, CARGO consistently improves over random offloading, self-confidence routing, CoT-step routing, and the supervised learned router. This shows that the agreement-estimated reliability signal remains effective on challenging math datasets, even when the router is not trained specifically for each benchmark or target operating point.

These results further highlight the limitation of fixed routing heuristics. Self-confidence and CoT-step routing can be competitive in some settings, but their performance varies across models and datasets. The learned router also does not consistently transfer to these additional benchmarks, despite using supervised training. In contrast, CARGO adapts at inference time through local response agreement and therefore remains robust across different reasoning benchmarks and local model families.

\begin{table}[h]
\caption{Additional accuracy comparison at a fixed collaboration ratio $\rho=0.3$ on MATH-500 and AGI-Eval Math.}
\vspace{-0.05in}

\label{tab:appendix_math500_agieval}
\centering
\setlength{\tabcolsep}{4pt}
\resizebox{0.72\textwidth}{!}{
\small
\begin{tabular}{l l 
    >{\centering\arraybackslash}p{5em}
    >{\centering\arraybackslash}p{5em}}
\toprule
\textbf{Model} & \textbf{Method}
& \textbf{MATH-500}
& \makecell{\textbf{AGI-Eval}\\\textbf{Math}} \\
\midrule

\multirow{5}{*}{
    \begin{tabular}{l}
        \textbf{Qwen2.5-} \\
        \textbf{3B-Instruct}
    \end{tabular}
}
& Random Offloading 
& 70.80 & 68.90 \\
& Self-Confidence Router~\cite{zhao2024fact} 
& 77.20 & 74.50 \\
& CoT steps Router~\cite{fangbridging}
& 71.00 & 68.70 \\
& Learned Router~\cite{ding2024hybrid}
& 73.00 & 68.50 \\
& \colorcolsA{\textbf{CARGO (Ours)}} 
  & \colorcolsA{82.40} 
  & \colorcolsA{80.60} \\
\midrule

\multirow{5}{*}{
    \begin{tabular}{l}
        \textbf{Phi-3-mini-} \\
        \textbf{4k-Instruct}
    \end{tabular}
}
& Random Offloading 
& 59.11 & 56.55 \\
& Self-Confidence Router 
& 59.32 & 59.46 \\
& CoT steps Router 
& 64.13 & 61.66 \\
& Learned Router 
& 61.52 & 59.66 \\
& \colorcolsA{\textbf{CARGO (Ours)}} 
  & \colorcolsA{67.13} 
  & \colorcolsA{65.57} \\
\midrule

\multirow{5}{*}{
    \begin{tabular}{l}
        \textbf{Llama-3.2-} \\
        \textbf{3B-Instruct}
    \end{tabular}
}
& Random Offloading 
& 60.56 & 60.27 \\
& Self-Confidence Router 
& 64.20 & 64.70 \\
& CoT steps Router 
& 68.00 & 68.00 \\
& Learned Router 
& 65.80 & 64.50 \\
& \colorcolsA{\textbf{CARGO (Ours)}} 
  & \colorcolsA{68.80} 
  & \colorcolsA{69.10} \\

\bottomrule
\end{tabular}
}
\vspace{-0.05in}
\end{table}

\subsection{Performance of Training-Free Signals on smaller LLMs}
\label{appen:small_llms}
\begin{figure}[h!]
    \centering
    \includegraphics[width=0.95\linewidth]{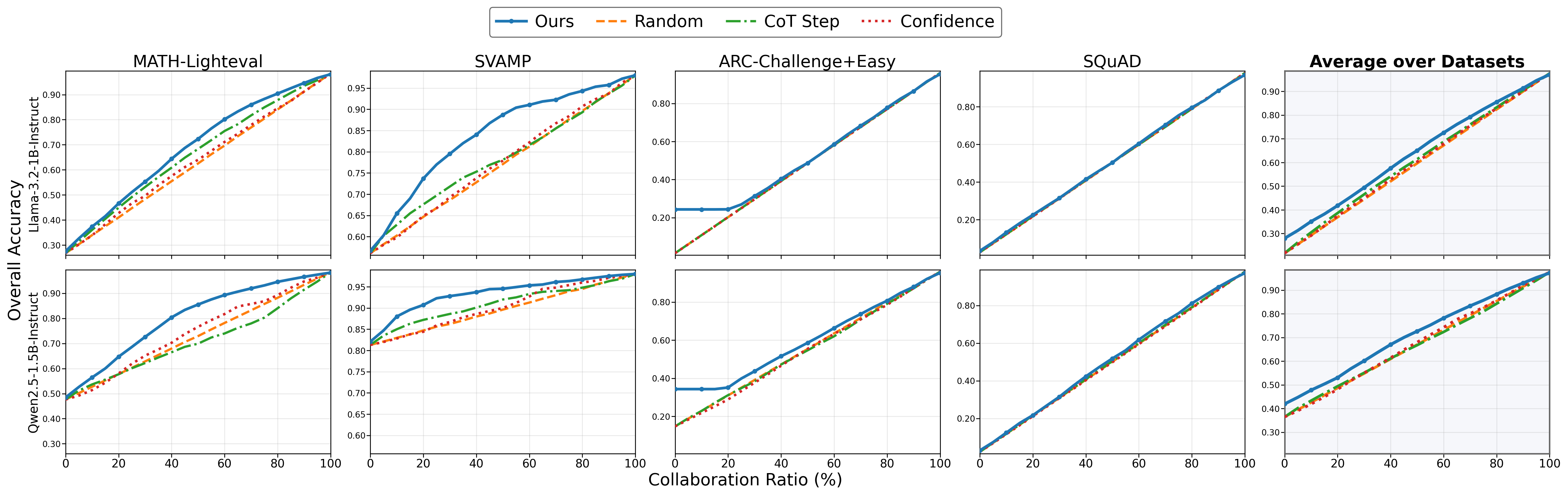}
    \caption{Performance on 1B-sized LLMs.}
    \label{fig:1Bmodel}
\end{figure}
Figure \ref{fig:1Bmodel} illustrates an important limitation of training-free collaboration signals. When the local model is too small to possess meaningful reasoning ability, these signals become much less informative for routing. In both the 1B model (Llama-3.2-1B-Instruct) and the 1.5B model (Qwen2.5-1.5B-Instruct), the performance gaps between agreement, confidence, step-based heuristics, and even random routing shrink substantially on several datasets, and on some tasks the curves are nearly indistinguishable.

This behavior is expected. Training-free signals can only exploit reliability structure that already exists in the local model; they cannot create reasoning capability by themselves. When the local model is too weak, repeated samples may simply reproduce the same incorrect answer or unstable shallow reasoning, so agreement or self-reported confidence no longer correlates well with correctness. As a result, routing based on such signals provides limited benefit.

Therefore, our method is most effective when the local model is already capable of solving a nontrivial portion of inputs, so that inference-time variability meaningfully reflects difficulty. Figure \ref{fig:1Bmodel} thus highlights a boundary of applicability: below a certain capability threshold, training-free routing becomes ineffective, and stronger local models or trained collaboration mechanisms may be required.

\subsection{Evaluation Signal Performance after RLHF finetuning}
\label{appen:rlhf_tuned}

\begin{figure}[h!]
    \centering
    \includegraphics[width=0.95\linewidth]{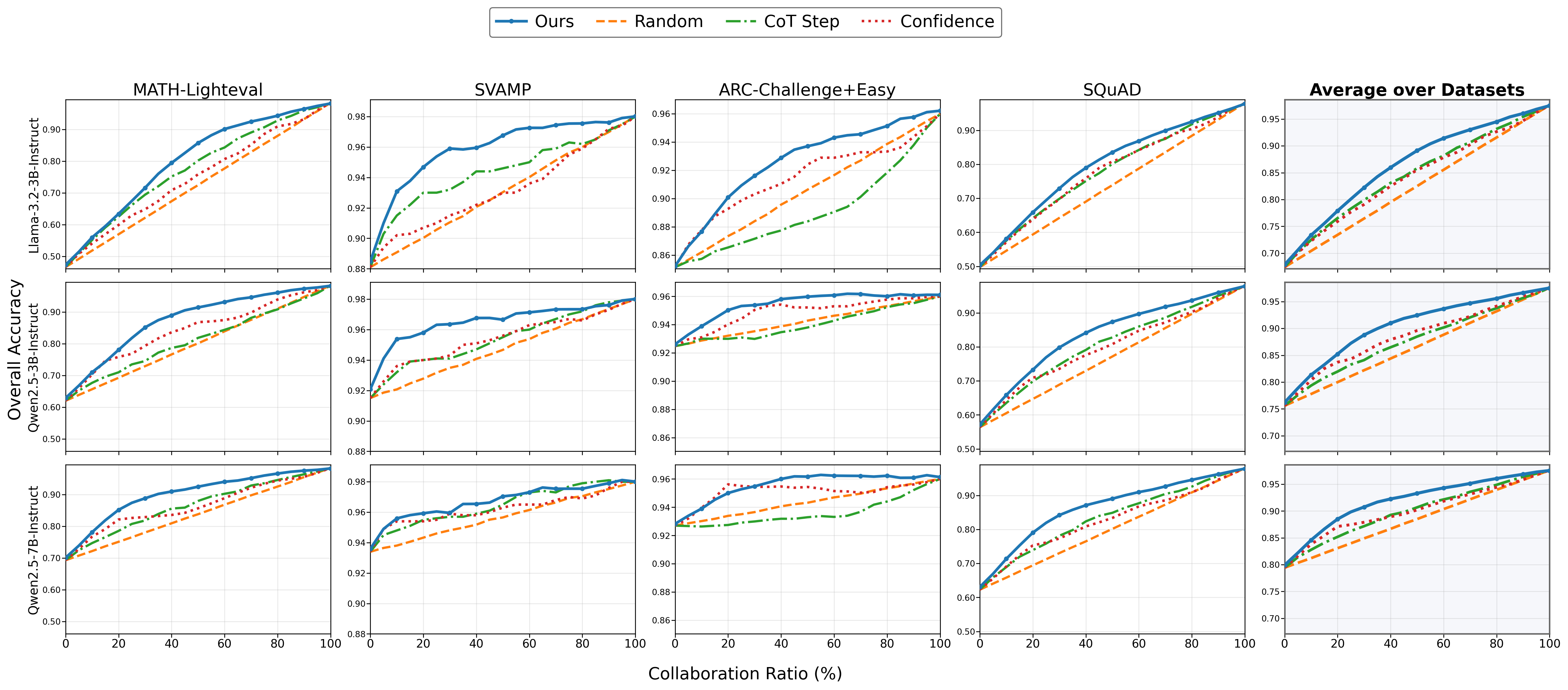}
    \caption{Performance after RLHF-finetuning local LLMs on specific tasks. It suggests that the main role of finetuning is not to replace routing, but to make routing possible in a more meaningful way. Once the local model is adapted to the task family, agreement-based signals become much better aligned with correctness, leading to clearer gains over random, confidence-based, and step-based routing.}
    \label{fig:pretrained}
\end{figure}

Figure~\ref{fig:pretrained} shows that task-aligned finetuning makes training-free routing more effective, rather than replacing the need for routing. Compared with the pretrained local models, the finetuned models achieve higher absolute accuracy and exhibit a clearer separation between routing strategies, with our method remaining strongest across most collaboration ratios. This suggests that finetuning moves the local model into a more capable regime where inference-time agreement becomes better aligned with correctness.

The gains are also consistent with the finetuning domain. Finetuning on MATH-Lighteval yields the largest improvement on MATH-Lighteval and also transfers to SVAMP, while finetuning on ARC gives the most direct benefit on ARC and a weaker but still visible improvement on SQuAD. Overall, these results support the view that training-free signals are most useful once the local model has sufficient task-relevant capability, making finetuning and inference-time routing complementary components of effective local-cloud collaboration.

\begin{algorithm}[tb]
\caption{{\tt CARGO}-Token: Token-Budgeted Collaboration-Adaptive Routing}
\label{alg:cargo_tok_alg}
\small
\DontPrintSemicolon
\SetKw{Break}{break}
\KwIn{
local model $M_L$; cloud model $M_C$; prompt set $\mathcal{S}=\{s_i\}_{i=1}^M$; temperature $T$; credible level $1-\delta$;
max samples $K_{\max}$; warmup iterations $T_0$; warmup batch size $B$; target average token budget $\tau$; step size $\eta_\lambda$;
Beta prior $(\alpha_0,\beta_0)$; width threshold $\varepsilon$; logistic slope $\gamma$;
token weights $\alpha,\beta$
}
\KwOut{for each query $x$: final answer $y(x)$, routing indicator $r(x)\in\{0,1\}$, and token cost $c(x)$}

\BlankLine
\textbf{Warmup (token-budget control):} initialize intercept $\lambda\in\mathbb{R}$\;
sample a fixed warmup batch $\mathcal{B}=\{x^{(i)}\}_{i=1}^{B}$\;
\For{$t \leftarrow 1$ \KwTo $T_0$}{
    \ForEach{$x \in \mathcal{B}$}{
        $(y(x), r(x), c(x)) \leftarrow \textsc{RouteQueryTok}(x,\lambda)$\;
    }
    $\bar c_t \leftarrow \frac{1}{B}\sum_{x\in\mathcal{B}} c(x)$\;
    $\lambda \leftarrow \lambda - \eta_\lambda(\bar c_t-\tau)$\tcp*{drive $\bar c_t \rightarrow \tau$}
}
\textbf{Deploy:} fix $\lambda$ for subsequent queries.\;

\BlankLine
\SetKwFunction{FRouteTok}{RouteQueryTok}
\SetKwProg{Fn}{Function}{:}{}
\Fn{\FRouteTok{$x,\lambda$}}{
    initialize multiset $\mathcal{A}\leftarrow \emptyset$ and counts $c(\cdot)\leftarrow 0$\;
    \For{$k \leftarrow 1$ \KwTo $K_{\max}$}{
        sample prompt template $s_k \sim \mathrm{Unif}(\mathcal{S})$, sample response $A_k \sim M_L(\cdot \mid x,s_k;T)$\;
        update counts: $\mathcal{A}_k\leftarrow\mathcal{A}_{k-1}\cup\{A_k\}$, $c(A_k)\leftarrow c(A_k)+1$\;
        update empirical agreement $(\hat a_k,\hat c_k)\leftarrow(\arg\max_a c_k^a,\;\max_a c_k^a)$\;
        compute posterior $\theta\mid\mathcal{A}_k\sim\mathrm{Beta}(\alpha_0+\hat c_k,\beta_0+k-\hat c_k)$
        and credible interval $[L_k,U_k]$\;
        \If{$U_k-L_k \le \varepsilon$}{
            \Break\tcp*{confidence-based early exit}
        }
    }

    compute agreement estimate $\hat\theta \leftarrow \hat c_k/k$ and offload probability $p \leftarrow \sigma\!\big(\gamma(\lambda-\hat\theta)\big)$\;
    sample $r \sim \mathrm{Bernoulli}(p)$\;

    \eIf{$r=0$}{
        $c(x)\leftarrow 0$\;
        \Return $(\hat a_k,0,c(x))$ \tcp*{accept local answer, no cloud token cost}
    }{
        query cloud $y \sim M_C(\cdot \mid x)$\;
        measure cloud input/output lengths $L_{\mathrm{in}}(x)$ and $L_{\mathrm{out}}(x)$\;
        $c(x)\leftarrow \alpha L_{\mathrm{in}}(x)+\beta L_{\mathrm{out}}(x)$\;
        \Return $(y,1,c(x))$ \tcp*{offload to cloud with token cost}
    }
}
\end{algorithm}

\subsection{Cloud Offloading based on Total Token Consumption}
\label{appen:token}
We also consider a token-budgeted variant of CARGO in Algorithm~\ref{alg:cargo_tok_alg}. Rather than controlling the average offloading ratio, this version controls the average cloud token cost
$
c(x)=r(x)\big(\alpha L_{\mathrm{in}}(x)+\beta L_{\mathrm{out}}(x)\big),
$
where $L_{\mathrm{in}}(x)$ and $L_{\mathrm{out}}(x)$ are the cloud input and output token lengths. The warmup update of $\lambda$ is modified accordingly to match a target average token budget $\tau$. All other components remain the same, including agreement-based reliability estimation, posterior credible intervals, and stochastic offloading based on the estimated local reliability. The empirical results are shown in Figure~\ref{fig:token_offload}, where the x-axis represents the total allowed cloud-token budget instead of an offloading ratio.

\begin{figure}
    \centering
    \includegraphics[width=0.95\linewidth]{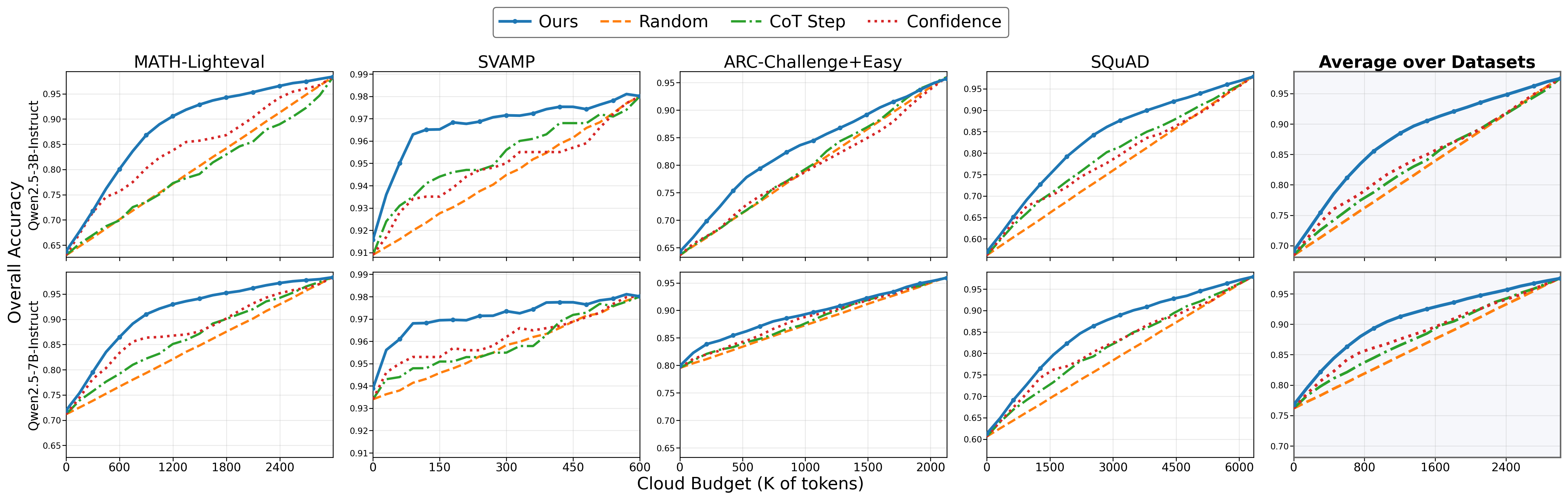}
    \caption{Offloading evaluation under token-budgeted routing. The x-axis shows the total allowed cloud-token budget.}
    \label{fig:token_offload}
\end{figure}

\section{Prompt Design for Evaluation Signals}
\label{app:prompt_design}

We use prompt-based evaluation signals for several training-free baselines. 
These prompts are designed to induce structured reasoning, standardized final-answer formatting, or explicit self-reported confidence, depending on the signal being evaluated. 
Across all settings, we standardize the final answer format using $\backslash boxed\{\cdot\}$ to simplify answer extraction and comparison.

\subsection{Baseline evaluation signals.}
We consider two prompt templates for the baseline signals used in our experiments: 
(i) a step-based reasoning prompt, and 
(ii) a confidence-reporting prompt. 
The step-based prompt is used to elicit explicit multi-step reasoning before producing the final boxed answer. 
The confidence prompt uses the same reasoning requirement, but additionally asks the model to output a scalar confidence score in $[0,1]$ on the final line. 
These prompts are shown in Table~\ref{tab:baseline_prompts}.

\begin{table*}[h]
\centering
\caption{System prompts used for baseline evaluation signals.}
\label{tab:baseline_prompts}
\small
\begin{tabular}{p{0.16\linewidth} p{0.78\linewidth}}
\toprule
Prompt type & System prompt \\
\midrule
Step
& \texttt{You are a helpful reasoning assistant for both questioning and math problems. When a question is posed, try to answer using step-by-step reasoning. Reason step by step, using format: Step 1: ..., Step 2: ..., Step 3: ..., etc. You must prefix each reasoning line with `Step k:' exactly (e.g., `Step 1:', `Step 2:'). Do not use other numbering formats such as `1.', `(1)', or `- Step 1'. If the problem appears difficult or hard to solve, elaborate your reasoning with more detailed steps. Otherwise, use fewer steps as appropriate. Always box the final answer using $\backslash$boxed\{\}, e.g., Answer: $\backslash$boxed\{42\} (math) or Answer: $\backslash$boxed\{choice\} (general QA).} \\[0.5em]

Confidence
& \texttt{You are a careful reasoning assistant. You must follow the required output format strictly. 1. Solve the problem step by step. 2. Box the answer using $\backslash$boxed\{\}. 3. Always box the answer using $\backslash$boxed\{\}, e.g., Answer: $\backslash$boxed\{42\} (math) or Answer: $\backslash$boxed\{choice\} (general QA). 4. On a new line, output your confidence as a decimal between 0 and 1. The confidence must follow EXACTLY this format: Confidence: p, where p is a decimal number between 0 and 1 (example: 0.73). Do not output percentages. Do not output text after the confidence line. The confidence line must be the final line of your output.} \\
\bottomrule
\end{tabular}
\end{table*}

\subsection{Dataset-specific user prompts.}
While the system prompt determines the response format, the user prompt is lightly adapted to match the structure of each benchmark. 
Table~\ref{tab:user_prompts} summarizes the user prompt templates used across dataset families.

\begin{table}[h]
\centering
\caption{User prompt templates for different benchmark types.}
\label{tab:user_prompts}
\small
\begin{tabular}{p{0.24\linewidth} p{0.68\linewidth}}
\toprule
Dataset type & User prompt template \\
\midrule

Math benchmarks 
&  \{\textit{question}\}. Let's think step by step. \\[0.4em]

ARC, MMLU  
& \{\textit{question}\}. Let's think step by step. Possible answers: \{\textit{choices}\} \\[0.4em]

SQuAD
& {[context]\{\textit{context}\} [question]\{\textit{question}\}. Let's think step by step.} \\
\bottomrule
\end{tabular}
\end{table}

\subsection{Consistency signal prompts.}
For the consistency-based signal, we do not rely on a single fixed prompt. 
Instead, we use a set of semantically similar but stylistically distinct system prompts that all enforce the same output structure, namely step-by-step reasoning followed by a boxed final answer. 
This prompt variation is used to induce diverse reasoning trajectories while preserving comparable answer formatting, so that cross-sample agreement can be measured reliably, as shown in Table~\ref{tab:consistency_prompts_part1} and \ref{tab:consistency_prompts_part2}.

Concretely, all consistency prompts share the following properties: 
(i) they require reasoning in the format \texttt{Step 1:}, \texttt{Step 2:}, \ldots; 
(ii) they prohibit alternative numbering styles; and 
(iii) they require the final answer to appear in $\backslash boxed\{\cdot\}$. 
The variants differ only in the style of reasoning they encourage, such as concise reasoning, detailed derivation, plan-then-execute reasoning, explicit verification, example-driven reasoning, or structure-first reasoning.

\begin{table*}[h!]
\centering
\caption{System prompt variants used for consistency-based sampling (Part I).}
\label{tab:consistency_prompts_part1}
\small
\setlength{\tabcolsep}{6pt}
\renewcommand{\arraystretch}{1.15}
\begin{tabular}{p{0.14\linewidth} p{0.80\linewidth}}
\toprule
Variant & System prompt \\
\midrule

Original &
{\ttfamily
You are a helpful reasoning assistant for both questioning and math problems. When a question is posed, try to answer using step-by-step reasoning. Reason step by step, using format: Step 1: ..., Step 2: ..., Step 3: ..., etc. You must prefix each reasoning line with `Step k:' exactly (e.g., `Step 1:', `Step 2:'). Do not use other numbering formats such as `1.', `(1)', or `- Step 1'. If the problem appears difficult or hard to solve, elaborate your reasoning with more detailed steps. Otherwise, use fewer steps as appropriate. Always box the final answer using $\backslash$boxed\{\}, e.g., Answer: $\backslash$boxed\{42\} or Answer: $\backslash$boxed\{choice\}.}
\\[0.6em]

Concise &
{\ttfamily
You are a helpful reasoning assistant for both questioning and math problems. When a question is posed, answer with a small number of high-signal steps. Reason step by step, using format: Step 1: ..., Step 2: ..., etc. You must prefix each reasoning line with `Step k:' exactly (e.g., `Step 1:'). Do not use other numbering formats such as `1.', `(1)', or `- Step 1'. Keep steps minimal and avoid unnecessary elaboration. Always box the final answer using $\backslash$boxed\{\}, e.g., Answer: $\backslash$boxed\{42\} or Answer: $\backslash$boxed\{choice\}.}
\\[0.6em]

Detailed &
{\ttfamily
You are a helpful reasoning assistant for both questioning and math problems. When a question is posed, answer using detailed step-by-step reasoning with careful derivations. Reason step by step, using format: Step 1: ..., Step 2: ..., etc. You must prefix each reasoning line with `Step k:' exactly (e.g., `Step 1:'). Do not use other numbering formats such as `1.', `(1)', or `- Step 1'. If the problem is difficult, break it into smaller substeps and justify each transition. Always box the final answer using $\backslash$boxed\{\}, e.g., Answer: $\backslash$boxed\{42\} or Answer: $\backslash$boxed\{choice\}.}
\\[0.6em]

Plan-Execute &
{\ttfamily
You are a helpful reasoning assistant for both questioning and math problems. When a question is posed, first outline a short plan, then execute it. Reason step by step, using format: Step 1: ..., Step 2: ..., etc. You must prefix each reasoning line with `Step k:' exactly (e.g., `Step 1:'). Do not use other numbering formats such as `1.', `(1)', or `- Step 1'. Use Step 1 as a plan, then proceed with computations in later steps. Always box the final answer using $\backslash$boxed\{\}, e.g., Answer: $\backslash$boxed\{42\} or Answer: $\backslash$boxed\{choice\}.}
\\[0.6em]

Verify &
{\ttfamily
You are a helpful reasoning assistant for both questioning and math problems. When a question is posed, solve it, then verify the result with an independent check. Reason step by step, using format: Step 1: ..., Step 2: ..., etc. You must prefix each reasoning line with `Step k:' exactly (e.g., `Step 1:'). Do not use other numbering formats such as `1.', `(1)', or `- Step 1'. Include one step near the end explicitly labeled as a verification step. Always box the final answer using $\backslash$boxed\{\}, e.g., Answer: $\backslash$boxed\{42\} or Answer: $\backslash$boxed\{choice\}.}
\\[0.6em]
\bottomrule
\end{tabular}
\end{table*}
\begin{table*}[h!]
\centering
\caption{System prompt variants used for consistency-based sampling (Part II).}
\label{tab:consistency_prompts_part2}
\small
\setlength{\tabcolsep}{6pt}
\renewcommand{\arraystretch}{1.15}
\begin{tabular}{p{0.14\linewidth} p{0.80\linewidth}}
\toprule
Variant & System prompt \\
\midrule

Edge-Check &
{\ttfamily
You are a helpful reasoning assistant for both questioning and math problems. When a question is posed, solve it and then test the solution on at least one edge case or sanity check. Reason step by step, using format: Step 1: ..., Step 2: ..., etc. You must prefix each reasoning line with `Step k:' exactly (e.g., `Step 1:'). Do not use other numbering formats such as `1.', `(1)', or `- Step 1'. Include a final step that checks an edge case, unit consistency, or a quick sanity test. Always box the final answer using $\backslash$boxed\{\}, e.g., Answer: $\backslash$boxed\{42\} or Answer: $\backslash$boxed\{choice\}.}
\\[0.6em]

Algebraic &
{\ttfamily
You are a helpful reasoning assistant for both questioning and math problems. When a question is posed, prefer an algebraic, symbolic derivation rather than intuition. Reason step by step, using format: Step 1: ..., Step 2: ..., etc. You must prefix each reasoning line with `Step k:' exactly (e.g., `Step 1:'). Do not use other numbering formats such as `1.', `(1)', or `- Step 1'. Favor explicit equations and transformations. Always box the final answer using $\backslash$boxed\{\}, e.g., Answer: $\backslash$boxed\{42\} or Answer: $\backslash$boxed\{choice\}.}
\\[0.6em]

Example-Driven &
{\ttfamily
You are a helpful reasoning assistant for both questioning and math problems. When a question is posed, use small examples to discover the pattern before generalizing. Reason step by step, using format: Step 1: ..., Step 2: ..., etc. You must prefix each reasoning line with `Step k:' exactly (e.g., `Step 1:'). Do not use other numbering formats such as `1.', `(1)', or `- Step 1'. Start with one or two concrete examples, then generalize. Always box the final answer using $\backslash$boxed\{\}, e.g., Answer: $\backslash$boxed\{42\} or Answer: $\backslash$boxed\{choice\}.}
\\[0.6em]

Structure-First &
{\ttfamily
You are a helpful reasoning assistant for both questioning and math problems. When a question is posed, look for invariants, symmetries, or conserved quantities first. Reason step by step, using format: Step 1: ..., Step 2: ..., etc. You must prefix each reasoning line with `Step k:' exactly (e.g., `Step 1:'). Do not use other numbering formats such as `1.', `(1)', or `- Step 1'. Prioritize structural insights before computation. Always box the final answer using $\backslash$boxed\{\}, e.g., Answer: $\backslash$boxed\{42\} or Answer: $\backslash$boxed\{choice\}.}
\\[0.6em]

Strict-Format &
{\ttfamily
You are a helpful reasoning assistant for both questioning and math problems. When a question is posed, you must follow the required output format exactly. Reason step by step, using format: Step 1: ..., Step 2: ..., etc. You must prefix each reasoning line with `Step k:' exactly (e.g., `Step 1:'). Do not use other numbering formats such as `1.', `(1)', or `- Step 1'. Do not include any text outside the Step lines and the final boxed answer line. Always box the final answer using $\backslash$boxed\{\}, e.g., Answer: $\backslash$boxed\{42\} or Answer: $\backslash$boxed\{choice\}.}
\\[0.6em]

Final-Last &
{\ttfamily
You are a helpful reasoning assistant for both questioning and math problems. When a question is posed, provide reasoning steps first and put the final answer only at the end. Reason step by step, using format: Step 1: ..., Step 2: ..., etc. You must prefix each reasoning line with `Step k:' exactly (e.g., `Step 1:'). Do not use other numbering formats such as `1.', `(1)', or `- Step 1'. The last line must be exactly: Answer: $\backslash$boxed\{...\}.}
\\

\bottomrule
\end{tabular}
\end{table*}

In all consistency experiments, the user prompt is kept fixed within each dataset, and only the system prompt is varied across samples.

\newpage
\null
\newpage
\null
\newpage

\end{document}